\documentclass[10pt,dvipsnames]{article}
\usepackage[utf8]{inputenc} %
\usepackage[T1]{fontenc}    %
\usepackage{url}            %
\usepackage[top=1in, bottom=1in, left=1in, right=1in]{geometry}
\usepackage{mathtools}
\usepackage{amsmath, amssymb, amsthm, bbm}
\usepackage{physics}
\usepackage{colorblind}
\usepackage{graphicx}
\usepackage{hyperref}       %
\usepackage{caption}
\usepackage{tensor}
\usepackage[most]{tcolorbox}
\usepackage{empheq}
\usepackage{enumitem}
\usepackage{float}
\usepackage{graphicx}
\usepackage{booktabs}       %
\usepackage{amsfonts}       %
\usepackage{nicefrac}       %
\usepackage{microtype}      %
\usepackage{cleveref}
\usepackage{natbib}
\usepackage{tikz}
\usepackage{xfrac}
\usepackage{algpseudocode}
\usepackage{algorithm}
\usepackage{authblk}
\usepackage{dsfont}
\usepackage{subcaption}

\newcommand{\bx}{\mathbf{x}}%
\newcommand{\bz}{\mathbf{z}}%
\newcommand{\bpi}{\boldsymbol{\pi}}%

\newcommand{\bW}{\mathbf{W}}%

\newcommand{\bb}{\mathbf{b}}%
\newcommand{\ba}{\boldsymbol{\alpha}}%

\newcommand{\bI}{\mathbf{I}}%
\newcommand{\bun}{\mathbf{1}}%
\newcommand{\be}{\mathbf{e}}%

\DeclareMathOperator*{\argmax}{argmax}
 
\setcitestyle{authoryear,round,citesep={;},aysep={,},yysep={;}}
\hypersetup{colorlinks, breaklinks=true, urlcolor=orange, linkcolor=blue, citecolor=blue}

\theoremstyle{plain}
\newtheorem{theorem}{Theorem}[section]
\newtheorem{proposition}{Proposition}
\newtheorem{lemma}{Lemma}
\newtheorem{corollary}{Corollary}
\theoremstyle{definition}
\newtheorem{definition}{Definition}

\theoremstyle{remark}

\crefname{assumption}{Assumption}{Assumptions}

\title{The Well-Tempered Classifier: \\ Some Elementary Properties of Temperature Scaling}

\author[1]{Pierre-Alexandre Mattei}
\author[2]{Bruno Loureiro}

\affil[1]{\small Université Côte d'Azur, Inria, CNRS, LJAD, France}
\affil[2]{\small Département d’Informatique, École Normale Supérieure - PSL, CNRS, France}

\date{\today}

\allowdisplaybreaks
\begin{document}

\maketitle
\begin{abstract}
  Temperature scaling is a simple method that allows to control the uncertainty of probabilistic models. It is mostly used in two contexts: improving the calibration of classifiers and tuning the stochasticity of large language models (LLMs). In both cases, temperature scaling is the most popular method for the job. Despite its popularity, a rigorous theoretical analysis of the properties of temperature scaling has remained elusive. We investigate here some of these properties. For classification, we show that increasing the temperature increases the uncertainty in the model in a very general sense (and in particular increases its entropy). However, for LLMs, we challenge the common claim that increasing temperature increases diversity. Furthermore, we introduce two new characterisations of temperature scaling. The first one is geometric: the tempered model is shown to be the information projection of the original model onto the set of models with a given entropy. The second characterisation clarifies the role of temperature scaling as a submodel of more general linear scalers such as matrix scaling and Dirichlet calibration: we show that temperature scaling is the only linear scaler that does not change the hard predictions of the model.
\end{abstract}

\section{Introduction}
\label{sec:intro}
Reliable uncertainty quantification in deep learning remains an open problem, poised between strong theoretical limits \citep{foygel2021limits} and a flurry of proposed solutions (see e.g. \citealp{silva2023classifier,angelopoulos2023conformal, ulmer2023prior, papamarkou2024position} for presentations of the different paradigms at play).

In a seminal paper, \citet{guo2017calibration} advocated the use of an old and very simple uncertainty quantification method called \emph{temperature scaling}. This technique employs a single scalar parameter (the temperature) to tune the confidence of a trained neural network. In the complex landscape of uncertainty quantification, temperature scaling imposed itself as an immensely popular method. Its use goes from calibrating models in industrial machine learning---it is implemented in the Scikit-learn library \citep{scikit-learn}---to controlling inference in large language models (LLMs)---its use is pervasive in virtually all LLMs, and is explicitly mentioned in the technical reports of GPT-4 \citep{achiam2023gpt}, Gemini \citep{team2023gemini}, DeepSeek \citep{liu2024deepseek}, and Mistral \citep{rastogi2025magistral}. Often, other uncertainty quantification techniques use temperature scaling as a building block (e.g. \citealp{berta2025rethinking,gibbs2025conformal}).

In spite of this popularity, there is a surprising lack of theoretical investigations of temperature scaling. Exceptions include \citet{clarte2023double,clarte2023expectation}, who studied in particular its asymptotic behaviour under model misspecification, \citet{dabah2025temperature} who looked at the interplay between temperature scaling and conformal prediction, and \citet{berta2025structured}, who studied in which cases it can (and cannot) be optimal. This relative absence may be due to the fact that the model is quite simple, and resembles many well-studied models in statistics, machine learning, and statistical physics. Nevertheless, we believe a thorough yet elementary inspection of temperature scaling to be both timely and interesting. After reviewing and revisiting temperature scaling in Section \ref{sec:whatisTS} (both for classification and LLMs), our \textbf{main contributions} are:
\begin{itemize}
    \item In Section \ref{sec:entropy}, we show that for classification, increasing the temperature has the expected effect of increasing the uncertainty of the model. However, we highlight that this may not be the case for LLMs.
    \item We provide in Section \ref{sec:geometry} a geometric interpretation of temperature scaling as an information projection: a tempered model is the model closest to the original one that has the required level of entropy.
    \item An important property of temperature scaling is that it is \emph{accuracy-preserving:} it does not change the ordering of the classes. We show in Section \ref{sec:consistency} that is actually a defining property, in the sense that it is the only accuracy-preserving linear scaler.
\end{itemize}

\textbf{Notation.} Vectors are denoted by boldfaces.
We denote by $\Delta_{K}=\{\bpi\in [0,1]^{K} |  \pi_{1} + \ldots + \pi_K=1\}$ the $K$-simplex and by $H$ the Gibbs-Shannon entropy of a discrete distribution.

\section{What is temperature scaling?}
\label{sec:whatisTS}
We start with a pretrained model $f: \mathcal{X} \rightarrow \mathbb{R}^K$ where $K$ is the number of classes --- for instance, a neural network trained on a classification problem. The output of this model are the logits, i.e. the softmax preactivations $\bz = f(\bx)$. Standard predictive probabilities are obtained by then applying a softmax layer:
\begin{equation}
	p(y | \bz) = \text{Categorical}(y |\bpi) = \pi_y, 
\end{equation}
with
\begin{equation}
\boldsymbol{\pi} = \text{Softmax} (\bz ) \in \Delta_{K},
\end{equation}
where $y \in \{1,\ldots ,K\}$ is the class label and
\begin{equation}
	\text{Softmax} (\bz ) = \left( \frac{e^{z_1}}{e^{z_1}+ \ldots + e^{z_K}}, \ldots, \frac{e^{z_K}}{e^{z_1}+ \ldots + e^{z_K}} \right).
\end{equation}
Here, we mean ‘‘model" in a very broad sense: while $f$ could be a standard classifier (with features $\bx$ and label $y$), it could also be, for instance, an autoregressive language model (in that case, $\bx$ would be the previous tokens, $y$ the next token(s), and $K$ the size of the vocabulary).

Each $\pi_k\in[0,1]$ corresponds to the probability that the model assigns to class $k$. Sometimes, one may not be too happy with these probabilities. In particular, in the context of classification tasks with neural networks, a widely reported issue is that these probabilities are generally overconfident, leading to poor calibration \citep{guo2017calibration, minderer2021revisiting} and difficult distillation \citep{hinton2015distilling}. 
For LLMs, it is desirable to let users sharpen or smoothen these probabilities to give them control of the amount of stochasticity of the generations, also referred to as ``diversity'' in this context. Temperature scaling is a simple and efficient method that allows us to slightly alter the probabilities in order to meet one of these goals. 

The idea is to multiply all logits by a learnable parameter $\beta > 0$, called the \emph{inverse temperature} in analogy to a Boltzmann-Gibbs distributions in statistical physics. This leads to a new conditional distribution
\begin{equation}
    p_{\beta}(y | \bz) = \text{Categorical}(y |\boldsymbol{\pi}_\beta ) \text{ with } \boldsymbol{\pi}_\beta = \text{Softmax} (\beta \bz ).
\end{equation}
Since $\beta$ is constrained to be positive, the coefficients of $\beta \bz$ are ordered in the same way as those of $\bz$. Thus, the hard predictions of $p_{\beta}$ will be identical to the ones of the original model. In particular, this implies that temperature scaling is \emph{accuracy-preserving: it does not change the accuracy of the initial model.} This feature was already highlighted by \citet{guo2017calibration}, and is particularly relevant for modern deep classifiers, which generally excel in terms of accuracy, but are often overconfident. As we will see in Section \ref{sec:consistency}, this is in fact a defining characteristic of temperature scaling.

\paragraph{What if we don't have the logits?} We may not have access to the logits $\bz$, but only to the class probabilities $\boldsymbol{\pi}$. A natural idea is to replace the logits by the log-probabilities $\log \bpi$. Since the softmax is not invertible, $\log \bpi$ is not necessarily equal to $\bz$ and there may be a risk of loosing partial information when using $\log \bpi$ instead of $\bz$. Fortunately, this is not the case and the two approaches are equivalent. More precisely, as detailed in Appendix  \ref{app:logitsvsloprobs},
\begin{align}
    \text{Softmax} (\beta \bz ) &= \text{Softmax} (\beta \log \bpi ) = \left( \frac{\pi_1^{\beta}}{\pi_1^{\beta}+ \ldots + \pi_K^{\beta}}, \ldots, \frac{\pi_K^{\beta}}{\pi_1^{\beta}+ \ldots + \pi_K^{\beta}} \right).
    \label{eq:logitsvssoftmax}
\end{align}
Because of this equivalence, some papers (e.g. \citealp{guo2017calibration}) define temperature scaling using the logits, and some using the log-probabilities (e.g. \citealp{berta2025structured}).

Another simple way of write Equation \eqref{eq:logitsvssoftmax} is $p_\beta \propto p^\beta$.

\subsection{Why is $\beta$ called the inverse temperature?} 
\label{sec:statphys}
A tempered softmax probabilistic classifier admits a natural interpretation in terms of a statistical physics problem in the canonical ensemble with $K$ discrete states and inverse temperature $\beta$, where each state $k\in \{1, \ldots, K \}$ has energy given by the (negative) logit $E_{k}=-z_{k}$. From this perspective, the tempered categorical distribution is just the Boltzmann-Gibbs distribution of the system: 
\begin{align}
    p_{\beta}(y|\bz)=\frac{e^{-\beta E_{y}}}{\mathcal{Z}(\beta, \bz)},
\end{align}
where the denominator
\begin{equation}
    \mathcal{Z}(\beta, \bz) = e^{\beta z_1}+ \ldots + e^{\beta z_K}.
\end{equation}
is called the \emph{partition function}.
Another equivalent point of view would be to see $p_{\beta}$ as an exponential family distribution with sufficient statistic $z_y$.

The partition function has a few remarkable yet elementary properties that will considerably simplify computations related to temperature scaling. In particular, the derivatives of its logarithm are related to the moments of the logit of a random label.

\begin{lemma} We have
\label{prop:logpartition}
   \begin{align}
        &\frac{d \log \mathcal{Z}(\beta, \bz)}{d \beta} =  \mathbb{E}_{y \sim p_\beta(y | \bx)} [ z_y ], \\
        &\frac{d^2 \log \mathcal{Z}(\beta, \bz)}{d \beta^2} =  \textup{Var}_{y \sim p_\beta(y | \bx)} ( z_y ).
    \end{align}
\end{lemma}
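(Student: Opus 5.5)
The plan is a direct differentiation of $\log \mathcal{Z}(\beta, \bz) = \log\left(e^{\beta z_1} + \ldots + e^{\beta z_K}\right)$ with respect to $\beta$, with $\bz$ held fixed. This is the standard cumulant-generating-function argument for exponential families. Since $\mathcal{Z}$ is a finite sum of smooth, strictly positive functions of $\beta$, it is smooth and positive, so $\log \mathcal{Z}$ is smooth and term-by-term differentiation is legitimate.

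\textbf{First derivative.} By the chain rule,
\begin{equation*}
\frac{d \log \mathcal{Z}}{d\beta} = \frac{1}{\mathcal{Z}} \sum_{k=1}^K z_k e^{\beta z_k} = \sum_{k=1}^K z_k \, \frac{e^{\beta z_k}}{\mathcal{Z}(\beta,\bz)} .
\end{equation*}
By the definition of the tempered model, $e^{\beta z_k}/\mathcal{Z}(\beta,\bz) = p_\beta(k\mid \bz)$. The right-hand side is therefore exactly $\mathbb{E}_{y\sim p_\beta}[z_y]$.

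\textbf{Second derivative.} I would differentiate the quotient $\mathcal{Z}'/\mathcal{Z}$, which gives $\mathcal{Z}''/\mathcal{Z} - (\mathcal{Z}'/\mathcal{Z})^2$. Since $\mathcal{Z}'' = \sum_k z_k^2 e^{\beta z_k}$, the first term is $\mathbb{E}_{y\sim p_\beta}[z_y^2]$. The second term is the square of the first derivative just computed, namely $\left(\mathbb{E}_{y\sim p_\beta}[z_y]\right)^2$. Their difference is $\textup{Var}_{y\sim p_\beta}(z_y)$.

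The only point needing care, rather than being a real obstacle, is notation. The statement writes $p_\beta(y\mid \bx)$, which coincides with $p_\beta(y\mid \bz)$ because $\bz = f(\bx)$ is fixed. I would also state explicitly that the derivatives are taken in $\beta$ only.
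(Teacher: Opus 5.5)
Your proof is correct and is essentially the paper's argument. The first derivative is computed identically; for the second, the paper differentiates $\sum_y z_y\, p_\beta(y\mid\bx)$ using $\frac{d}{d\beta}\log p_\beta(y\mid\bx) = z_y - \mathbb{E}_{y\sim p_\beta(y\mid\bx)}[z_y]$ rather than applying the quotient rule to $\mathcal{Z}'/\mathcal{Z}$, which is only a bookkeeping difference, since both routes arrive at $\mathbb{E}[z_y^2]-\mathbb{E}[z_y]^2$.
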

Variants of Lemma \ref{prop:logpartition} are standard results in statistical physics (see e.g. \citealp{kardar2007statistical}) and in the theory of conditional exponential families (see e.g. \citealp{domke2020moment}). We give a proof in our specific context in Appendix \ref{app:logpartition}. These results will prove useful to study the properties of the likelihood, that can be written, for a single data point, as
\begin{equation}
    \log p_\beta (y| \bx) = \beta z_{y} - \log \mathcal{Z}(\beta, \bz).
\end{equation}
From the statistical physics perspective, this is simply the free energy cost for the system to occupy state $y$.

\subsection{How to tune the temperature?}

\subsubsection{Calibrating classifiers}
In the context of calibration, to tune $\beta$, it is customary to use a fresh set of labelled data $(\bx_1,y_1) \ldots, (\bx_n,y_n)$, called a \emph{calibration set}. Once the corresponding predictions $\boldsymbol{\pi}_1,   \ldots ,\boldsymbol{\pi}_n$ have been computed (this involves a forward pass through the neural network),  the inverse temperature is tuned by minimising the cross-entropy loss (or equivalently, maximising the log-likelihood) of the one-parameter model $p_\beta$ over the calibration set. This leads to the estimate
\begin{equation}
	\hat{\beta} = \text{argmin}_{\beta > 0} \mathcal{L}(\beta), \text{ with } \mathcal{L}(\beta) = -\frac{1}{n} \sum_{i=1}^n \log  p_{\beta} ( y_i | \bx_i). %
\end{equation}
It is sensible to tune the temperature using the cross-entropy for several reasons. First, when $n \rightarrow \infty$, $p_{\beta}$ will converge to the closest model (in terms of Kullback-Leibler divergence) to the true data-generating process (see e.g. \citealp{van2000asymptotic}, Example 2.25). Moreover, the cross-entropy is a proper scoring rule, which means that maximising it should lead to well-calibrated predictions \citep{blasiok2023does}. Typically, cross-entropy is also used for training the classifier.  Employing the same loss for training and calibration suggests an apparent paradox: since we used the same loss for training and calibration, why was the initial model poorly calibrated? 
Classical statistical intuition suggests that one possible reason is that the original model, typically an overparametrised deep neural network, was likely subject to severe overfitting, which leads to overconfidence. On the other hand, temperature scaling involves fitting a \emph{one-parameter model}, which is unlikely to overfit. However, modern networks can overfit without loosing in generalisation error, a phenomenon known as \emph{benign overfitting}. In particular, asymptotic results on high-dimensional logistic regression have shown a non-monotonic behaviour of the calibration with overparametrisation \citep{bai2021don,clarte2023double, clarte2023theoretical}, suggesting a more complex picture for the source of overconfidence in overparametrised models. 

Minimising the cross-entropy over the calibration set is not a particularly difficult task, as we shall see below. Interestingly, its gradient has a simple moment-matching interpretation for the cross-entropy, and its Hessian can be written as the variance of the logits of the original model.

\begin{proposition}
\label{prop:likelihood}
	The function $\mathcal{L}$ is smooth and convex. For any $\beta$, its first and second derivative satisfy
	\begin{align}
		\label{eq:grad-hess}
		 \frac{d \mathcal{L}(\beta)}{d\beta} &=\frac{1}{\beta} \left( \underbrace{-\frac{1}{n} \sum_{i=1}^n  \log p_\beta (y_i | \bx_i)}_\textup{cross-entropy}  -  \underbrace{\left(-\frac{1}{n} \sum_{i=1}^n\mathbb{E}_{y \sim p_\beta(y | \bx_i)} [ \log p_\beta(y |\bx_i)]\right)} _\textup{expected cross-entropy}\right),\\
    \frac{d^2 \mathcal{L} (\beta)}{d\beta^2}  &= \frac{1}{n} \sum_{i=1}^n \textup{Var}_{y \sim p_\beta(y | \bx_i)} ( z_{iy} ).
    \end{align}    
    
	If there exists at least one prediction not entirely made of ties, i.e. if there exists $i \in \{1, \ldots, n\}, y,y' \in \{1, \ldots, K\}$ such that $z_{iy} \neq z_{iy'}$, then $\mathcal{L}$ is strictly convex. Otherwise, if all predictions are ties, $\mathcal{L}$ is constant.

	If there is at least one misclassification in the calibration set, then  $\mathcal{L}$ has a unique minimiser. Otherwise, if the calibration accuracy is one, $\mathcal{L}$ is strictly decreasing: in a sense, the optimal temperature is thus zero. 
\end{proposition}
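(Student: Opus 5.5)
The plan is to write $\mathcal{L}(\beta) = \frac{1}{n}\sum_{i=1}^n \left(\log \mathcal{Z}(\beta,\bz_i) - \beta z_{iy_i}\right)$ and differentiate term by term using Lemma \ref{prop:logpartition}. Smoothness is immediate, since each $\log\mathcal{Z}(\beta,\bz_i)$ is the log of a finite sum of exponentials.

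For the derivatives, Lemma \ref{prop:logpartition} gives
\[
\mathcal{L}'(\beta) = \frac{1}{n}\sum_{i=1}^n \left(\mathbb{E}_{y\sim p_\beta(y|\bx_i)}[z_{iy}] - z_{iy_i}\right),
\qquad
\mathcal{L}''(\beta) = \frac{1}{n}\sum_{i=1}^n \textup{Var}_{y\sim p_\beta(y|\bx_i)}(z_{iy}).
\]
To get the cross-entropy form, substitute $\beta z_{iy} = \log p_\beta(y|\bx_i) + \log\mathcal{Z}(\beta,\bz_i)$ into both terms. The $\log\mathcal{Z}$ contributions cancel, because $\mathbb{E}_y[\log\mathcal{Z}] = \log\mathcal{Z}$. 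Dividing by $\beta$ then yields the stated difference between the cross-entropy and the expected cross-entropy.

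Convexity follows because $\mathcal{L}''$ is a sum of variances and hence nonnegative. For strict convexity, note that $p_\beta(\cdot|\bx_i)$ has full support for every $\beta>0$. So $\textup{Var}(z_{iy})>0$ exactly when the coordinates of $\bz_i$ are not all equal. If some $i$ has $z_{iy}\neq z_{iy'}$, then $\mathcal{L}''>0$ everywhere and $\mathcal{L}$ is strictly convex. If every $\bz_i$ is constant, say $z_{iy}=c_i$ for all $y$, then $p_\beta(\cdot|\bx_i)$ is uniform for every $\beta$, so $\mathcal{L}\equiv\log K$ is constant.

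For existence and uniqueness of the minimiser, I would study the limits of $\mathcal{L}'$ at $0^+$ and at $+\infty$. Since $\mathcal{L}'$ is strictly increasing under strict convexity, a zero is unique if it exists.
\begin{itemize}
\item As $\beta\to 0^+$, $p_\beta(\cdot|\bx_i)$ tends to uniform, so $\mathcal{L}'(0^+) = \frac{1}{n}\sum_i(\bar z_i - z_{iy_i})$, where $\bar z_i$ is the mean of the coordinates of $\bz_i$.
\item As $\beta\to\infty$, $p_\beta(\cdot|\bx_i)$ tends to the uniform distribution on $\argmax_y z_{iy}$, so $\mathcal{L}'(\infty) = \frac{1}{n}\sum_i(\max_y z_{iy} - z_{iy_i})\ge 0$. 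Each term is nonnegative, and it is strictly positive exactly for misclassified points.
\end{itemize}

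\textbf{Main obstacle.} The limit at $0^+$ need not be negative: if the labels sit on the smallest logits, $\mathcal{L}'(0^+)>0$ and the infimum over $\beta>0$ is approached as $\beta\to 0$. So existence of a minimiser on $(0,\infty)$ seems to require an extra condition. One would be that the average true-class logit exceeds the average logit, which holds as soon as accuracy is better than a certain chance level. I would handle this either by making that assumption explicit or by allowing $\beta\in[0,\infty)$, where the minimiser is unique by strict convexity.

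On the existence side, one misclassification gives $\mathcal{L}'(\infty)>0$, so $\mathcal{L}$ eventually increases and a minimiser exists. Without misclassifications, every term $\max_y z_{iy}-z_{iy_i}$ vanishes. I would then show $\mathcal{L}'(\beta)<0$ for all finite $\beta$ directly: when $y_i$ is an argmax, $\mathbb{E}_{p_\beta}[z_{iy}]<\max_y z_{iy} = z_{iy_i}$ for any non-constant $\bz_i$. Hence $\mathcal{L}$ is strictly decreasing. Ties need care, since a point whose logit vector is entirely ties contributes zero, but strict decrease survives as long as at least one $\bz_i$ is non-constant.
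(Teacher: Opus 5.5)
Your computation of $\mathcal{L}'$ and $\mathcal{L}''$ from Lemma~\ref{prop:logpartition}, followed by the substitution $\beta z_{iy} = \log p_\beta(y|\bx_i) + \log\mathcal{Z}(\beta,\bz_i)$, is exactly the paper's proof. The appendix stops there and never argues the strict-convexity, constancy, or minimiser claims, which you supply correctly. That includes your caveat that strict decrease in the accuracy-one case needs at least one non-constant $\bz_i$.

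Your ``main obstacle'' is a genuine defect of the statement, not of your argument. Take $n=1$, $K=2$, $\bz_1=(1,0)$ and $y_1=2$. There is a misclassification, yet $\mathcal{L}(\beta)=\log(1+e^{\beta})$ is strictly increasing on $(0,\infty)$ and has no minimiser over $\beta>0$. So the claim needs either the condition $\mathcal{L}'(0^+)<0$, i.e.\ $\frac1n\sum_i z_{iy_i} > \frac1n\sum_i \bar z_i$ (with $\bar z_i$ the average coordinate of $\bz_i$), or the closed domain $[0,\infty)$, exactly as you propose.

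One correction: your gloss that this condition holds once accuracy beats some chance level is false. No accuracy threshold below one guarantees it. Take $K=2$, $n-1$ correctly classified points with $\bz_i=(\varepsilon,0)$ and $y_i=1$, and one misclassified point with $\bz_n=(M,0)$ and $y_n=2$. Then $\mathcal{L}'(0^+)=\frac{1}{2n}\bigl(M-(n-1)\varepsilon\bigr)>0$ whenever $M>(n-1)\varepsilon$, although the accuracy is $(n-1)/n$. State the hypothesis directly in terms of the logits rather than the accuracy.
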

The proof is a straightforward consequence of Lemma \ref{prop:logpartition} and is provided in Appendix \ref{app:likelihood}.
This result shows that, under very mild conditions, $\mathcal{L}$ will be easy to optimise. The initial implementation of \citet{guo2017calibration} used L-BFGS \citep{liu1989limited} to minimise $\mathcal{L}$ but simpler algorithms like the bisection method (advocated by \citealp{berta2025rethinking}) are also good options. Temperature scaling was recently implemented as an option of the \texttt{CalibratedClassifierCV} function, within the popular Scikit-learn library \citep{scikit-learn}, using an algorithm of \citet{brent1973algorithms} through the \texttt{minimize\_scalar} function of Scipy \citep{scipy}. Given the simplicity of the problem, any well-tuned optimiser should be able to find $\hat{\beta}$ efficiently. The only potentially problematic case is the one of a classifier with zero classification error on the calibration set. This may happen either because the model has perfectly learned a truly deterministic relationship between $x$ and $y$ (in this case, the initial model was \emph{not confident enough} and temperature scaling will fix this by making the model fully deterministic), or, more realistically, because the calibration set was too small (which is cause for concern, because temperature scaling will then artificially increase overconfidence). A simple way to alleviate the latter is to add the constraint $\beta \leq 1 $ which implies that the tempered model cannot be more confident than the original one. Another option is to slightly regularise the optimisation problem: for instance, \citet{berta2025rethinking} suggest using a form of Laplace smoothing. 

Equation $\eqref{eq:grad-hess}$ leads to an interesting interpretation of maximum likelihood as moment-matching. Indeed, finding a $\hat{\beta}$ such that $\nabla \mathcal{L}(\hat{\beta}) = 0$ means that the expected value of the cross-entropy of the tempered model, matches its actual observed value. This is related to the moment-matching interpretation of maximum-likelihood in exponential families (see e.g. \citealp{domke2020moment}). Although minimising the cross-entropy is arguably the most popular choice for tuning $\beta$, any other moment-matching technique can be chosen in principle. For instance, we could imagine matching the accuracy instead of the cross-entropy. This would be equivalent to the \emph{expectation consistency} of \citet{clarte2023expectation}, who, based on the observation that for the optimal classifier the expected accuracy should match the expected confidence, proposed to find a $\hat{\beta}$ such that
\begin{align}
\label{eq:EC}
    \quad\frac{1}{n}\sum\limits_{i=1}^{n}p_{\hat{\beta}}(y_{i}|\bx_{i}) = \frac{1}{n}\sum\limits_{i=1}^{n}\boldsymbol{1}(\hat{y}(\bx_{i})=y_{i}),
\end{align}
where $\hat{y}(\bx)=\argmax_{y} p_{\beta}(y|\bx)$ is the hard label predictor. Note that since tempering does not change the accuracy, the right-hand side is independent of $\beta$, and this condition can be imposed by optimising the difference over $\beta>0$ as in temperature scaling. \citet{clarte2023expectation} has shown that expectation consistency is competitive with minimising the cross-entropy and has proven that it can outperform it in synthetic classification tasks.

Other alternatives to maximum likelihood have been explored. \citet{mozafari2018attended} designed a loss function tailored for small calibration sets. \citet{shen2024thermometer} used variational Bayesian inference, in an LLM context.  

\subsubsection{Calibrating language models}

There are two different notions of calibration for LLMs. The first one, \emph{next token calibration}, is the simply the calibration of the autoregressive task of predicting the next token. The second one, \emph{semantic calibration}, refers to calibration of question-answering tasks.
In general, language models are somewhat well-calibrated \emph{for both tasks} after their pretraining phase, but this calibration is jeopardised by both fine-tuning based on reinforcement learning and chain-of-thought reasoning \citep{nakkiran2025trained}.

Temperature scaling has been used to tackle this lack of calibration, typically using calibration data in a fashion similar to what we described for classification  \citep{shen2024thermometer, xie2024calibrating}.

\subsubsection{Manual tuning for language models}

In contrast with the calibration setting described above, temperature scaling is most often used in language models as a manual decoding hyperparameter, rather than a parameter learned from labelled calibration data. In this context, the temperature is typically exposed to the user and adjusted heuristically to control the perceived diversity of the generated text. Alternative techniques to control text diversity are also used (e.g. nucleus sampling, \citealp{holtzman2020curious}).

 \citet{peeperkorn2024temperature} empirically showed that changing the temperature is very weakly correlated with text creativity. We will see in Section \ref{sec:entropy} that temperature can have counter-intuitive effects on LLMs.

\subsection{Extensions of temperature scaling}

\subsubsection{Linear scalers}

 In their seminal paper, \citet{guo2017calibration} also introduced a more general scaler called \emph{matrix scaling}, defined as 
\begin{equation}
\label{eq:matscaling}
    \tilde{p}_{\bW,\bb}(y | \bx) = \text{Categorical}(y |\text{Softmax} (\bW \bz + \bb )),
\end{equation}
where $\bW \in \mathbb{R}^{K \times K}$ and  $\bb \in \mathbb{R}^K$ are trainable parameters. For binary classification, matrix scaling reduces to Platt's \citeyearpar{platt1999probabilistic} scaling, and in the general multiclass case, it is equivalent to training a logistic regression model on top of the logits of the pretrained model. A similar model called \emph{Dirichlet calibration} was introduced by \citet{kull2019beyond}. Motivated by a generative perspective (seeing the probabilities of the initial model as a mixture of Dirichlet distributions), they proposed using log-probabilities instead of logits:
\begin{equation}
\label{eq:dirichlet}
    \check{p}_{\bW,\bb}(y | \bx) = \text{Categorical}(y |\text{Softmax} (\bW \log \bpi + \bb )).
\end{equation}

While more flexible than temperature scaling, matrix scaling is generally trickier to fit on small calibration sets because of its much larger number of parameters. Another issue is that it is possible that the calibrated model will predict different labels than the original one, i.e. the mode of $\tilde{p}_{\bW,\bb}$ may not be the same as the one of $p$, which may lead to a drop in accuracy. Partly due to these issues, \citet{guo2017calibration} found temperature scaling empirically superior to matrix scaling. Several papers have consequently proposed specific regularisation techniques to avoid overfitting, dating back to \citet{platt1999probabilistic} who used a form of soft labelling (to alleviate the fact that he was training the scaler using the training set). More recently, \citet{kull2019beyond} and \citet{berta2025structured} proposed regularisation strategies tailored for matrix scaling. Their conclusions were that, when carefully regularised, it can be competitive with temperature scaling.

\subsubsection{Nonlinear scalers}

A common limitation of temperature/matrix scaling and Dirichlet calibration is that they are linear in the logits or log-probabilities of the initial model. A simple way to turn temperature scaling into a nonlinear scaler is to use as temperature the output of a neural network. This idea has been explored in several works \citep{tian2020,tomani2022parameterized,joy2023sample}. Another nonlinear extension of temperature scaling is based on using a mixture of temperature-scaled models \citep{zhang2020mix}.

\subsection{The history of temperature scaling}

It seems that the first use of temperature scaling was by \citet{cox1958two}, who used it as a simple parametric test of calibration for binary classifiers. His null hypothesis was $\beta = 1$, which indeed implies perfect calibration. \citet{cox1958two} also mentioned that, when $\hat{\beta} > 1$, ``\textit{probabilities show the right general pattern of variation, but do not vary enough}". On the other hand, when $\hat{\beta} \in (0,1)$, ``\textit{probabilities vary too much}". Cox's \citeyearpar{cox1958two} test and its variants are still widely used in medical research \citep{stevens2020validation}.

The use of a temperature parameter together with a softmax layer is probably an old and common practice. It was particularly popularised by \citet{hinton2015distilling}. For neural language models, one of the early uses of a temperature was in the influential blog post of \citet{karpathyblog} on the successes of recurrent neural nets.
\section{Does higher temperature mean higher uncertainty?}
\label{sec:entropy}

The general motivation for tempering models is that it helps control their level of uncertainty. In this section, we investigate formalisations of this claim. We choose and fix some input $\bx \in \mathcal{X}$. A key motivating property of temperature scaling, already mentioned by \citet{guo2017calibration}, is its limiting behaviour. When $\beta \rightarrow 0$,  $p_\beta(y | \bx)$ converges to the uniform distributions over its support. On the other hand, when $ \beta \rightarrow \infty$, $p_\beta(y | \bx)$ converges to a point mass at the maximum probability class when $p(y | \bx)$ is unimodal (for a proof, see e.g. \citealp{girardin2016escort}, Proposition 1). Thus, by scaling the temperature, we can interpolate between the most uncertain model (the uniform distribution), and the most certain one (the point mass). We will see that this interpolation happens monotonically in a fairly general sense.

\subsection{Entropy is an increasing function of the temperature}

The fact that $\beta \mapsto H(p_\beta)$ decreases was recently proven by \citet[Proposition A.5]{dabah2025temperature} who showed that its derivative was nonpositive. We revisit this result here and provide an alternative and shorter proof, relying on the properties of the log-partition function (Lemma \ref{prop:logpartition}). This proof, inspired by standard statistical physics computations, highlights that the entropy is directly related to the variance of the logits.
\begin{proposition} 
\label{prop:ent_derivative}
For all $\beta >0$,
\begin{equation}
        \frac{dH(p_\beta(y | \bx))}{d \beta}  = - \beta \textup{Var}_{y \sim p_\beta(y | \bx)} ( z_y ) \leq  0.
        \end{equation}
\end{proposition}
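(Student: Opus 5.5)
The plan is to write the entropy of the tempered model in terms of the log-partition function and then differentiate using Lemma \ref{prop:logpartition}. Since $\log p_\beta(y | \bx) = \beta z_y - \log \mathcal{Z}(\beta, \bz)$, taking the expectation under $p_\beta$ gives
\begin{equation*}
H(p_\beta(y|\bx)) = -\mathbb{E}_{y \sim p_\beta(y|\bx)}[\log p_\beta(y|\bx)] = \log \mathcal{Z}(\beta,\bz) - \beta\, \mathbb{E}_{y \sim p_\beta(y|\bx)}[z_y].
\end{equation*}
By the first identity of Lemma \ref{prop:logpartition}, the expectation is exactly $\frac{d \log \mathcal{Z}(\beta,\bz)}{d\beta}$. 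Writing $A(\beta) = \log \mathcal{Z}(\beta,\bz)$, the entropy therefore takes the Legendre-type form $H(p_\beta) = A(\beta) - \beta A'(\beta)$.

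Next, I would differentiate this expression in $\beta$. The function $A$ is a log of a finite sum of exponentials in $\beta$, so it is smooth and the differentiation is legitimate. By the product rule,
\begin{equation*}
\frac{dH(p_\beta)}{d\beta} = A'(\beta) - A'(\beta) - \beta A''(\beta) = -\beta A''(\beta).
\end{equation*}
The second identity of Lemma \ref{prop:logpartition} then gives $A''(\beta) = \textup{Var}_{y \sim p_\beta(y|\bx)}(z_y)$, which is the claimed formula.

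Nonpositivity follows because $\beta > 0$ and a variance is nonnegative. There is no genuine obstacle here. The only point needing care is the identity $H = A - \beta A'$, which rests on the normalisation $\sum_y p_\beta(y|\bx) = 1$ when taking the expectation of the constant $\log\mathcal{Z}$. One could also remark that the derivative is strictly negative unless all logits are tied, mirroring the strict-convexity discussion in Proposition \ref{prop:likelihood}.
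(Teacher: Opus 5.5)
Your proof is correct and follows the paper's own argument. You write $H(p_\beta) = \log\mathcal{Z}(\beta,\bz) - \beta\,\mathbb{E}_{y\sim p_\beta(y|\bx)}[z_y]$, identify the expectation with $\frac{d\log\mathcal{Z}}{d\beta}$ via Lemma~\ref{prop:logpartition}, differentiate so that the first-derivative terms cancel, and conclude with the second identity of the lemma to obtain $-\beta\,\textup{Var}_{y\sim p_\beta(y|\bx)}(z_y)$.
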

As we saw in Proposition \ref{prop:likelihood}, the variance of the logits is also related to the second derivative of the likelihood, and hence, to the Fisher information. Therefore, an interpretation of this result is that \emph{the entropy will vary faster when there is a lot of information in the calibration set}.

While the (Shannon-Gibbs) entropy is the most common uncertainty measure in machine learning, many others exist. We will see that most reasonable measures of uncertainty, temperature scaling behaves monotonically.

\subsection{Beyond the Shannon-Gibbs entropy: majorisation}

Majorisation is a strong measure of dispersion for discrete distributions, orginally introduced in the influential book of \citet{hardy1952inequalities}. The notion has been applied (in particular to derive new inequalities) in many subfields of applied mathematics \citep{marshall2011inequalities} or statistical physics \citep{sagawa2022entropy}.

 There are many different equivalent definitions of majorisation (see e.g. \citealp{marshall2011inequalities}, p.14), but a compelling one is the following.

\begin{definition}
	Let $\bpi = (\pi_1,...,\pi_K)$ and $\boldsymbol{\rho} = (\rho_1,...,\rho_K)$ be vectors in $\mathbb{R}^K$. We say that $\bpi$ majorises $\boldsymbol{\rho}$, and denote it $\bpi \prec \boldsymbol{\rho}$ when, for all  convex functions $\phi: \mathbb{R}  \rightarrow \mathbb{R}$,
	\begin{equation}
		\sum_{k=1}^K \phi(\pi_k ) \leq \sum_{k=1}^K \phi(\rho_k ).
	\end{equation}
\end{definition}

\textbf{Why is majorisation a good measure of dispersion?} It is instructive to look at a few examples of convex functions. Taking $\phi(\pi) = \pi \log \pi$ means that  $\bpi \prec \boldsymbol{\rho}$ implies $H(\bpi) \leq H(\boldsymbol{\rho})$. This means that majorisation is a stronger measure of dispersion than entropy. Actually, $\bpi \prec \boldsymbol{\rho}$  also implies that $\bpi $ has a smaller Renyi entropy than $\boldsymbol{\rho}$, and the same result is true for many generalisations and variants of entropy (see e.g. \citealp{kvaalseth2022entropies}).

It turns out that the effect of temperature on majorisation has been known for a while. The following result was originally derived by \citet{marshall1965norms} to bound the condition numbers of some matrices, and is mentioned and proven by \citet[p. 189]{marshall2011inequalities}. \citet[Theorem 4.1]{dabah2025temperature} proved the exact same result, but did not highlight its connection to majorisation theory.

\begin{theorem} \label{th:majorisation} Let $\bpi \in \Delta_K$. If $0<\beta_1<\beta_2$. Then,
	\begin{equation}
		\bpi_{\beta_1}\prec \bpi_{\beta_2}.
	\end{equation}
\end{theorem}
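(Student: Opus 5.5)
Note the paper's convention: $\bpi \prec \boldsymbol{\rho}$ means $\sum_k\phi(\pi_k)\le\sum_k\phi(\rho_k)$ for all convex $\phi$, so $\bpi$ is the \emph{more spread out} vector. The claim is therefore that $\bpi_{\beta_1}$ is more uniform than $\bpi_{\beta_2}$.

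I would use the classical partial-sum characterisation of majorisation \citep[p.~14]{marshall2011inequalities}. Since both vectors lie in $\Delta_K$ and so have equal total sums, $\bpi_{\beta_1}\prec\bpi_{\beta_2}$ holds if and only if, after sorting each vector in decreasing order, every partial sum of the $m$ largest entries of $\bpi_{\beta_1}$ is at most the corresponding partial sum for $\bpi_{\beta_2}$, for $m=1,\ldots,K-1$. Because $t\mapsto t^\beta$ is increasing, tempering preserves the order of the coordinates. So I relabel once so that $\pi_1\ge\cdots\ge\pi_K$, with zero entries placed last; these stay zero for every $\beta$. This ordering is then valid for every $\bpi_\beta$ simultaneously. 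It therefore suffices to show that for each fixed $m$ the map
\begin{equation}
S_m(\beta)=\frac{\sum_{k\le m}\pi_k^\beta}{\sum_{k\le K}\pi_k^\beta}
\end{equation}
is nondecreasing in $\beta>0$, using $\bpi_\beta$ from \eqref{eq:logitsvssoftmax}.

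For the monotonicity I would differentiate, in the spirit of Lemma~\ref{prop:logpartition}. Write $z_k=\log\pi_k$ for the nonzero entries and let $A=\{1,\ldots,m\}$. Then $\log S_m(\beta)=\log\mathcal{Z}_A(\beta)-\log\mathcal{Z}(\beta)$, where $\mathcal{Z}_A$ is the partial partition function over $A$. Lemma~\ref{prop:logpartition}, applied to both terms, gives
\begin{equation}
\frac{d\log S_m}{d\beta}=\mathbb{E}_{p_\beta}[z_y\mid y\in A]-\mathbb{E}_{p_\beta}[z_y].
\end{equation}
The unconditional mean is a convex combination of the conditional means on $A$ and on $A^c$. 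Every logit in $A$ is at least every logit in $A^c$, so the conditional mean on $A$ dominates the one on $A^c$, and hence dominates the overall mean. The derivative is therefore nonnegative.

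I expect the main obstacle to be bookkeeping rather than analysis. Zero entries have to be handled either by restricting to the support or by noting that they contribute zero to every partial sum. Ties in $\bpi$ also need care: they keep the common ordering valid, since tied entries stay tied. The other point to settle is the direction of the paper's $\prec$ convention relative to the partial-sum criterion, which I would fix by sanity-checking the limits. As $\beta\to0$ the tempered vector tends to the uniform distribution on the support, which should be the majorised extreme; as $\beta\to\infty$ it tends to a point mass, the other extreme. As an alternative to differentiation, the same inequality $S_m(\beta_1)\le S_m(\beta_2)$ can be shown directly. Cross-multiplying reduces it to $\sum_{i\le m<j}\big(\pi_i^{\beta_2}\pi_j^{\beta_1}-\pi_i^{\beta_1}\pi_j^{\beta_2}\big)\ge0$. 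Each term is nonnegative because $(\pi_i/\pi_j)^{\beta_2-\beta_1}\ge1$ whenever $\pi_i\ge\pi_j$.
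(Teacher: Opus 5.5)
Your proof is correct. You also got the orientation of $\prec$ right: under the paper's convex-function definition (which forces equal sums via $\phi(t)=\pm t$), $\bpi_{\beta_1}\prec\bpi_{\beta_2}$ is equivalent to the sorted top-$m$ partial sums of $\bpi_{\beta_1}$ being dominated by those of $\bpi_{\beta_2}$. Since tempering preserves the ordering of the coordinates, the claim reduces to monotonicity of $S_m$.

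The paper does not prove this theorem itself; it defers to \citet{marshall1965norms}, \citet[p.~189]{marshall2011inequalities} and \citet{dabah2025temperature}. Your argument therefore supplies a self-contained proof along the standard partial-sum route, where the paper relies on citation.

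Your derivative version also ties the result to the paper's own machinery. By Lemma~\ref{prop:logpartition}, $\frac{d}{d\beta}\log S_m(\beta)=p_\beta(A^c)\bigl(\mathbb{E}_{p_\beta}[z_y\mid y\in A]-\mathbb{E}_{p_\beta}[z_y\mid y\notin A]\bigr)\ge 0$. This is a natural companion to the identity $dH/d\beta=-\beta\,\textup{Var}(z_y)$ in Proposition~\ref{prop:ent_derivative}, and is obtained the same way.

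Your cross-multiplication variant is more elementary and avoids logits altogether. In it, write each term as $\pi_i^{\beta_1}\pi_j^{\beta_1}\bigl(\pi_i^{\beta_2-\beta_1}-\pi_j^{\beta_2-\beta_1}\bigr)$ rather than using the ratio $\pi_i/\pi_j$, so that zero entries need no separate treatment.
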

This result directly implies that $\beta \mapsto H(p_\beta (y | \bx))$ is a decreasing function, not only for the Shannon entropy but for all other entropies that are weaker than majorisation. For instance, the function $\beta \mapsto \max_y \log p_\beta (y |\bx)$ is increasing.

\textbf{The impact of temperature on class proportions.} The previous results formalise that, for a given $\bx$, the conditional distribution $p_\beta (y | \bx)$ gets more and more spread out when $\beta$ shrinks. Surprisingly, this does not imply that the class proportions $p_\beta (y) = \mathbb{E}_{p(\bx)} [p_\beta(y | \bx)]$ will also get more and more spread out. Indeed, averaging is known to break majorisation \citep[Chapter 6]{marshall2011inequalities}, except under very strong conditions (that do not hold in practical classification settings). Therefore, one must remain aware that changing the temperature may have an unexpected impact on class proportions. We will see next another quite unexpected behaviour of temperature scaling, when applied to the conditionals of an autoregressive model.

\subsection{For language models, increasing the temperature can decrease the entropy}
\label{sec:llms_ent}

The previous results are fairly intuitive and grant that temperature scaling mostly behaves ``as it should'' for classification. However, we will see here that it can have an unexpected on language models.
\begin{figure}[t]
    \centering
    \includegraphics[width=0.5\linewidth]{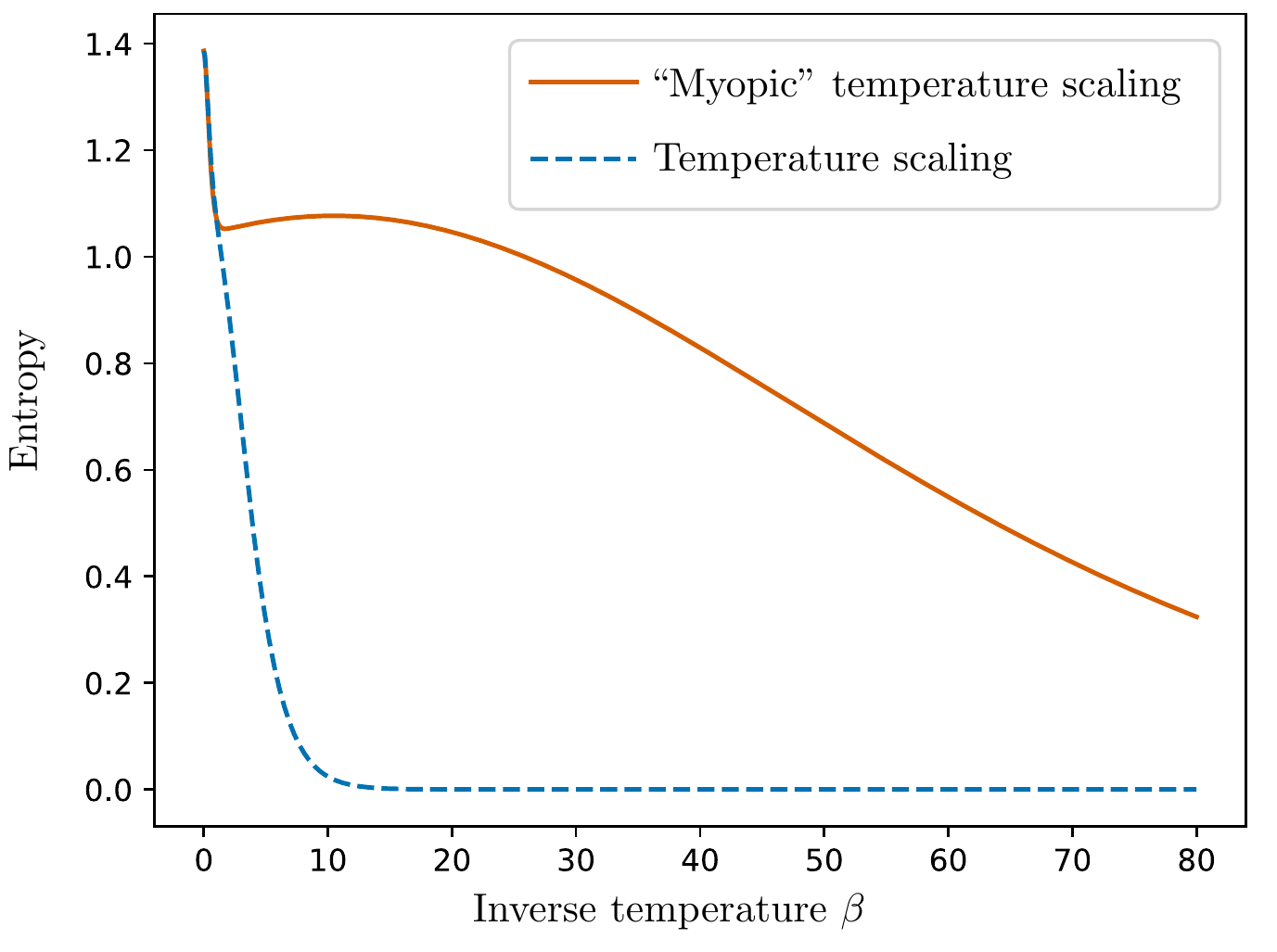}
    \caption{Entropies of two tempered versions of a toy language model. Standard temperature scaling has the expected monotonic effect on entropy, but is impractical for LLMs. The ‘‘myopic" version, widely used in practice, has a much less intuitive behaviour.}
    \label{fig:llm}
\end{figure}
Most state-of-the-art language models are autoregressive models of the form
\begin{equation}
    p(x_1, \ldots,x_T) = p(x_1) \prod_{t=1}^{T-1} p(x_{t+1}|x_t),
\end{equation}
where $x_1, \ldots, x_T$ are tokens living in a finite vocabulary $\mathcal{X}$.
Standard temperature scaling of the joint model is intractable in this case, since computing 
\begin{equation}
    p_\beta (x_1, \ldots,x_T) \propto p(x_1)^\beta \prod_{t=1}^{T-1} p(x_{t+1}|x_t)^\beta,
\end{equation}
would involve dealing with a normalising constant that is a sum over $|\mathcal{X}|^ T$ terms. A much simpler alternative, widely used in practice, is to apply temperature scaling independently to all decoding steps, leading to the distribution
\begin{equation}
    p_\beta'(x_1, \ldots,x_T) = p_\beta(x_1) \prod_{t=1}^{T-1} p_\beta(x_{t+1}|x_t),
\end{equation}
called \emph{myopic temperature scaling} in \citet{shih2023long}.

Of course, in general, there is no reason to expect that $p_\beta = p_\beta'$. Should this be a source of concern? \citet{shih2023long} provide a tractable approximation of $p_\beta$ called \emph{long horizon temperature scaling} and compare this approximation to $p_\beta'$. Based on these experiments and on a toy example, they convincingly argue that tuning the temperature of the approximation of $p_\beta$ leads to a much better quality/diversity trade-off than tuning the temperature of $p_\beta'$.

While standard temperature scaling $p_\beta$ has the predictable behaviour described previously (most reasonable measures of diversity will increase as the temperature increase), we provide here a counter-example that highlights that the entropy  of $p_\beta'$ can be nonmonotonic!

\begin{figure*}[t]
    \centering
    \includegraphics[width=0.6\linewidth]{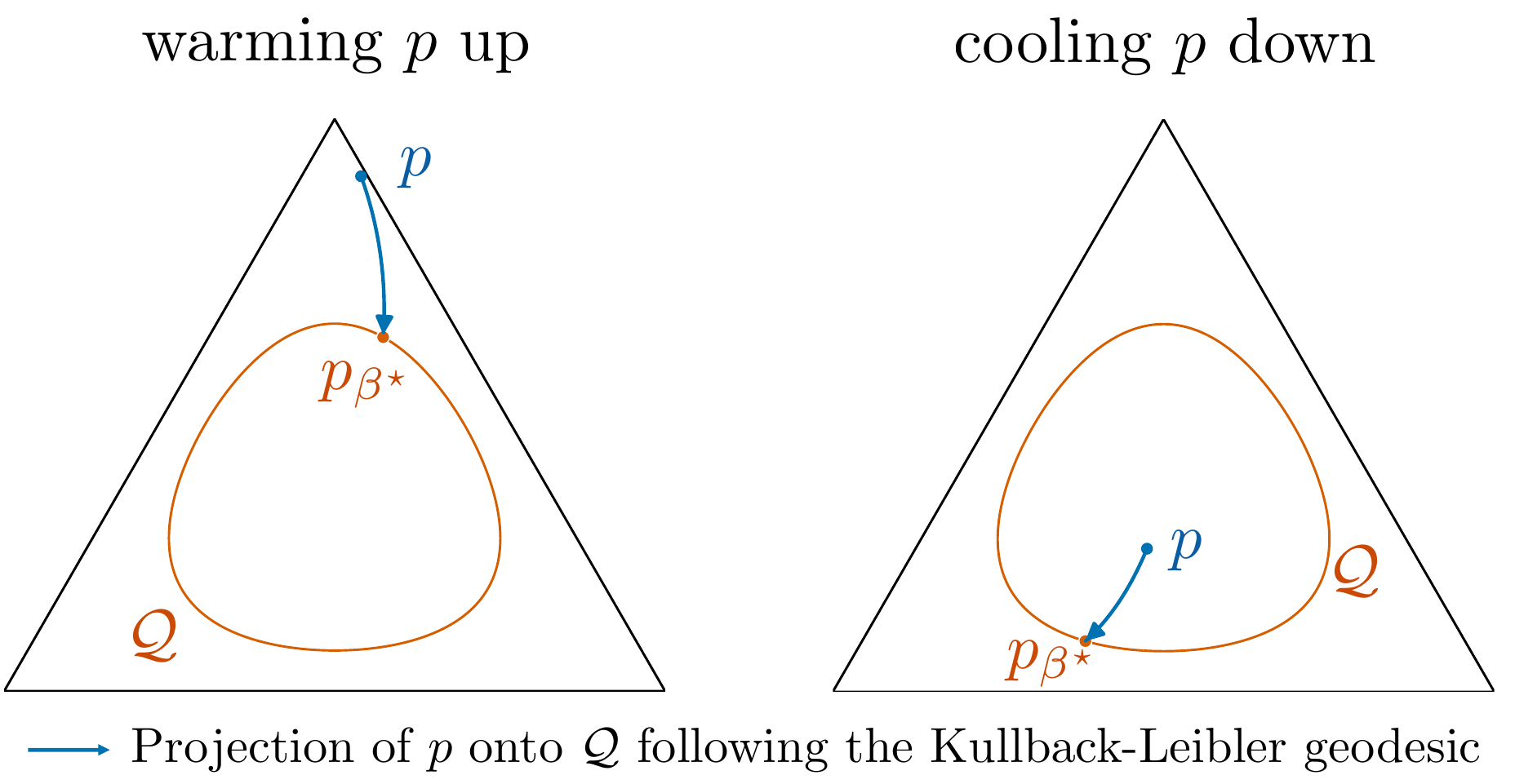}
    \caption{\label{fig:simplex}Temperature scaling as an information projection. In both cases, the initial model $p$ is projected onto the set $\mathcal{Q}$ of distributions of constant entropy $h^\star=0.9$. The blue arrow is the path of all tempered models between $p$ and its projection $p_{\beta^\star}$. This path can be interpreted as the geodesic between $p$ and $p_{\beta^\star}$. \emph{(Left)} The initial distribution $p=(0.01, 0.09, 0.9)$ has a lower entropy than $h^\star$, and is warmed up towards $p_{\beta^\star}$ with $\beta^\star \approx 0.37$. \emph{(Right)} The initial distribution $p=(0.4, 0.35, 0.25)$ has a higher entropy than $h^\star$, and is cooled down towards $p_{\beta^\star}$ with $\beta^\star \approx 3.98$.}
\end{figure*}

Consider the simple toy autoregressive model on a binary vocabulary $\mathcal{X}=\{0,1\}$
\begin{equation}
    p(x_1,x_2) = p(x_1)p(x_2|x_1),
\end{equation}
with $p(x_1)= \mathcal{B}(x_1 | \pi)$ and $p(x_2 | x_1)= \mathcal{B}(x_2 | \rho_{x_1})$ for some $\pi, \rho_0, \rho_1 \in (0,1)$. Surprisingly, for some values of the parameters, $\beta \mapsto H(p'_\beta)$ does not always decrease. Indeed, Figure \ref{fig:llm} shows that when $\pi = 0.51$, $\rho_0 = 0.01$, and $\rho_1 = 0.51$, the entropy of myopic temperature scaling decreases when $\beta$ small, but then starts to increase before finally decreasing again. Why is this the case? It is instructive to decompose the joint entropy using the chain rule:
\begin{equation}
     H(p'_{\beta}) = H(p_{\beta}(x_1)) +  \mathbb{E}_{p_\beta(x_1)} [H(p_{\beta}(x_2| x_1))].
\end{equation}
The entropy of $p_{\beta}(x_1)$ is decreasing so the monotonicity of the joint entropy will mostly depend on the conditional entropy of $x_2$ given $x_1$. When $\beta$ becomes smaller it becomes more and more certain that $x_1 = 0$. However, the conditional entropy of $x_2$ is much larger when $x_1 = 0$ than when $x_1 = 1$, so there will be a regime for which increasing $\beta$ will \emph{increase} the entropy.

This mechanism suggests that similar non-monotonic effects may arise whenever early-token uncertainty strongly conditions later-token entropy. More details on this experiment can be found in Appendix \ref{app:llm}.
\section{A geometric interpretation}
\label{sec:geometry}
We provide here geometric insights about how changing the temperature allows to travel along the simplex. None of these results are new, but they were, to the best of our knowledge, not discussed in this context.

\subsection{Temperature scaling as an information projection}

We saw in the previous section that temperature scaling allows to move from the initial model towards models with higher or lower entropy. There is a precise geometric interpretation of this fact, that we describe here.

Again, we fix some input $\bx \in \mathcal{X}$. The purpose of temperature scaling was to alter the initial model $p(y | x)$ to make it more or less uncertain, i.e. to increase or decrease its entropy. 
A geometric approach to this problem would be to select an entropy level $h^\star >0$ that we would like to reach, and find the distribution closest to $p(y | x)$ that has this given entropy. For instance, if $h^\star \geq H(p)$ then we are looking for a model less confident than $p$. Considering the Kullback-Leibler divergence as a measure of closeness, this becomes the following \emph{information projection} problem:
\begin{equation}
\label{eq:info_proj}
    q^\star(y |\bx) \in \textup{argmin}_{q \in \mathcal{Q}} \textup{KL} (q(y |\bx) || p(y |\bx)),
\end{equation}
where $\mathcal{Q} = \{q  \text{ s.t. } H(q) =  h^\star\}$. It turns out that the solution to this problem is exactly temperature scaling.

\begin{theorem}
\label{thm:info_proj:unique}
    If $p(y |\bx)$ is unimodal and has full support, then the information projection problem \eqref{eq:info_proj} has a unique solution $q^\star$, and there exists $\beta^\star\geq 0$ such that $q^\star(y |\bx) = p_{\beta^\star}(y|\bx).$
\end{theorem}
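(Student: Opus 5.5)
The plan is to turn the constrained KL minimisation into a Gibbs-type inequality that we check directly against a well-chosen tempered model. We implicitly assume $0<h^\star\leq \log K$, since otherwise $\mathcal{Q}$ is empty. On $\mathcal{Q}$ we have
$$\textup{KL}(q\|p) = -H(q) - \mathbb{E}_{q}[\log p(y|\bx)] = -h^\star - \mathbb{E}_{q}[\log p(y|\bx)].$$
So problem \eqref{eq:info_proj} is equivalent to maximising the linear functional $q\mapsto \mathbb{E}_q[\log p]$ over the entropy level set $\mathcal{Q}$. We will not set up a Lagrangian, because $\mathcal{Q}$ is not convex and the first-order conditions would not by themselves give global optimality. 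Instead, we first pick the candidate $\beta^\star$ and then verify optimality and uniqueness by a one-line KL argument.

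\textbf{Step 1 (existence of $\beta^\star$).} We work with $p_\beta \propto p^\beta$, as in \eqref{eq:logitsvssoftmax}, which is legitimate because $p$ has full support. The map $\beta\mapsto H(p_\beta)$ is continuous on $[0,\infty)$. Since $p$ has full support, $p_0$ is uniform, so $H(p_0)=\log K$. Since $p$ is unimodal, $p_\beta$ converges to a point mass as $\beta\to\infty$ (the limiting behaviour recalled in Section \ref{sec:entropy}), so $H(p_\beta)\to 0$. By the intermediate value theorem, there is a $\beta^\star\geq 0$ with $H(p_{\beta^\star})=h^\star$. By Proposition \ref{prop:ent_derivative} this map is also nonincreasing, and in fact strictly decreasing unless $p$ is uniform, which unimodality rules out when $K\geq 2$; this is not needed for the argument, however.

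\textbf{Step 2 (optimality and uniqueness).} First take $\beta^\star>0$. Write $\log p_{\beta^\star} = \beta^\star \log p - \log \mathcal{Z}$, where $\mathcal{Z}=\sum_y p(y|\bx)^{\beta^\star}$. Then $H(p_{\beta^\star}) = \log\mathcal{Z} - \beta^\star\mathbb{E}_{p_{\beta^\star}}[\log p]$. For any $q\in\mathcal{Q}$, nonnegativity of $\textup{KL}(q\|p_{\beta^\star})$ gives
$$0\leq -h^\star - \beta^\star \mathbb{E}_q[\log p] + \log \mathcal{Z} = \beta^\star\big(\mathbb{E}_{p_{\beta^\star}}[\log p] - \mathbb{E}_q[\log p]\big).$$
Dividing by $\beta^\star>0$ gives $\mathbb{E}_q[\log p]\leq \mathbb{E}_{p_{\beta^\star}}[\log p]$, i.e. $\textup{KL}(q\|p)\geq \textup{KL}(p_{\beta^\star}\|p)$. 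Equality holds iff $\textup{KL}(q\|p_{\beta^\star})=0$, i.e. iff $q=p_{\beta^\star}$. This yields both that $p_{\beta^\star}$ is a minimiser and that it is the unique one. The case $\beta^\star=0$ occurs only when $h^\star=\log K$. Then $\mathcal{Q}$ consists of the uniform distribution alone, since that is the only distribution with maximal entropy, so the solution is trivially unique and equals $p_0$.

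\textbf{Main obstacle.} The delicate part is not the inequality, which is short, but the boundary bookkeeping. We must make sure that $\beta^\star$ exists in $[0,\infty)$: this is where unimodality enters, because otherwise $H(p_\beta)$ tends to $\log m>0$, where $m$ is the number of modes, and small $h^\star$ would be unreachable. We must also keep $p_\beta\propto p^\beta$ well defined, which is where full support enters. Finally, we need to justify the limits at $0$ and $\infty$ and the continuity; all of these follow from the explicit formula $\pi_k^\beta/\sum_j\pi_j^\beta$. The case $h^\star=0$ cannot be handled with a finite $\beta^\star$, which is consistent with the assumption $h^\star>0$.
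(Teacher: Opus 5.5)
Your proof is correct, and it takes a different route from the paper. The paper gives no proof of its own. It defers to \citet{sgarro1978informational} and to the Lagrange-multiplier proof of \citet[Theorem 1]{girardin2016escort}, adding only the heuristic remark that minimising $\textup{KL}(q\|p)$ at fixed entropy is dual to maximising entropy at fixed internal energy. You instead produce the tempered candidate by the intermediate value theorem and certify it with one application of Gibbs' inequality, $\textup{KL}(q\|p_{\beta^\star})\geq 0$.

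Your route has several advantages:
\begin{itemize}
\item Global optimality and uniqueness come out at once. You never have to deal with the non-convexity of $\mathcal{Q}$, or with candidates on the boundary of the simplex, where $\log q$ is singular and stationarity conditions are not directly available.
\item The hypotheses are cleanly separated. Full support and unimodality are used only to guarantee that $\beta^\star$ exists. The verification step works for any full-support $p$, so it also settles the multimodal case whenever $\log m<h^\star\leq\log K$.
\item You do not need uniqueness of $\beta^\star$.
\end{itemize}

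Your key inequality is also the rigorous content behind the paper's thermodynamic remark. Rewritten as $\beta^\star\,\mathbb{E}_{q}[E_y]-H(q)\geq-\log\mathcal{Z}(\beta^\star,\bz)$, it is the Gibbs variational principle for the free energy.

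The Lagrangian route has its own advantages. It derives the form $q\propto p^{\beta}$ rather than presupposing it. Its machinery is also what Girardin et al.\ use to analyse the degenerate cases (several modes, incomplete support), where the problem is not solved by a finite-temperature $p_\beta$.
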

This result, illustrated on Figure \ref{fig:simplex}, means that \emph{we can interpret a tempered model as the closest model with the required amount of uncertainty}.
A version of this theorem dates back, at least, to \citet{sgarro1978informational}. A recent proof using Lagrange multipliers can be found in \citet[Theorem 1]{girardin2016escort}, who also discuss what happens when there are several modes or when the distribution does not have full support.

\Cref{thm:info_proj:unique} is also a well-known result in the statistical physics literature, where it is typically known in terms of its dual formulation (in the Lagrange sense). Indeed, recalling that in the statistical physics perspective of Section \ref{sec:statphys} $p(y|\bx)$ is the Boltzmann-Gibbs distribution with energies $E_y$ and at temperature $\beta=1$, opening up the expression of the KL divergence we can write:
\begin{align}
    \textup{KL} (q(y |\bx) || p(y |\bx))  = -H( q(y |\bx) )-\mathbb{E}_{y \sim q(y|\bx)}[\log p(y|\bx)] = -H(q(y |\bx))+\mathbb{E}_{y \sim q(y|\bx)}[ E_y ]+\log{\mathcal{Z}}(1,\bz).
\end{align}
Hence, minimising the $\textup{KL} (q || p)$ at fixed entropy $H(q)=h^{\star}$ is equivalent to maximising the entropy $H(q)$ at fixed ``internal energy'' $\mathbb{E}_{q}[E_{y}]=u^{\star}$, which is precisely the original construction of the canonical ensemble in \cite{boltzmann1885moglichkeit}, later formalised by \cite{gibbs1902elementary}. Indeed, entropy and energy are conjugate variables in thermodynamics, with the linear relationship between them typically used as a definition for the inverse-temperature in equilibrium statistical physics. From this perspective, the assumption of unimodality of $p(y|\bx)$ in Theorem \ref{thm:info_proj:unique} is equivalent to the existence of a unique ground state (i.e. unique minimal energy $E_{y}=-z_{y}$). This connection between information geometry and statistical physics was pioneered by \cite{jaynes1957information}.  

Note this result does not hold under other metrics than the Kullback-Leibler divergence, and temperature scaling will generally not be optimal. \citet[Proposition 4]{girardin2016escort} provide a similar result for the reverse Kullback-Leibler divergence.

\subsection{Tempering along geodesics}

What does it mean, geometrically, to move from one temperature to another? A starting point is to notice that, for any $0<\beta_1 < \beta < \beta_2$, 
\begin{equation}
    p_{\beta} \propto p^{\beta}= p^{\alpha \beta_1 + (1-\alpha) \beta_2}=p_{\beta_1} ^\alpha p_{\beta_2}  ^{1-\alpha},
\end{equation}
with $ \alpha = (\beta_2-\beta)/(\beta_2 - \beta_1) \in (0,1)$. This means that $p_{\beta}$ is the normalised geometric mean of  $p_{\beta_1}$ and $p_{\beta_2}$. Such normalised geometric means, sometimes called ‘‘products of expert'', ‘‘geometric mixtures'', or ‘‘logarithmic pool'', have many interesting properties (see e.g. \citealp{amari2007integration,carvalho2023,razafindralambo2026beyond}). 

 In particular, in information geometry, if we see the family of discrete distributions as an exponential family, the curve $\beta \in (\beta_1, \beta_2) \mapsto p_{\beta}$ is the geodesic from $p_{\beta_1}$ to $p_{\beta_2}$ (see, e.g. \citealp{amari2016information}, Section 2.4). This means that moving from one temperature to another will be moving along a geodesic, as seen in Figure \ref{fig:simplex}.

\section{Temperature scaling is the only accuracy-preserving linear scaler}
\label{sec:consistency}
We prove here another simple characterisation of temperature scaling, as the only linear scaler that does not alter ``hard" model predictions. We say that a scaler is \emph{accuracy-preserving} when it does not change the model predictions, i.e. when the $\argmax$ of the scaled predictions is the same as the one of the original predictions. Note that the  $\argmax$ of a vector is defined as the set of indices that attain the max of the vector (in particular it is not necessary a single label).

\citet{guo2017calibration} attributed the superiority of temperature scaling over matrix scaling to its accuracy-preserving nature. This property was then further studied by \citet{zhang2020mix}, who coined the term ``accuracy-preserving", and \citet{rahimi2020intra}, who both tried to design accuracy-preserving scalers that are more flexible than temperature scaling. These works, as well as other accuracy-preserving scalers proposed thereafter \citep{esaki2024accuracy,zhang2025instance}, gain this additional flexibility by making scalers nonlinear. Here, we prove that this nonlinearity is necessary:  \emph{it is impossible for a linear scaler to be both order-preserving and more flexible than temperature scaling}.

Our main tool will be the following result, that characterises argmax-invariant linear functions. To the best of our knowledge, this result has not appeared in the literature, and may be of independent interest.

\begin{theorem}
\label{th:accuracy_preserving}
    Let $\bW \in \mathbb{R}^{K \times K}$ and  $\bb \in \mathbb{R}^K$ such that, for all $\bz \in \mathbb{R}^K$ we have
    \begin{equation}
    \label{eq:invariance}
        \textup{argmax}(\bW \bz+\bb) = \textup{argmax}(\bz).
    \end{equation}
Then, there exist $\ba \in \mathbb{R}^K$, $\beta> 0$, and $\gamma \in \mathbb{R}$ such that
\begin{equation}
    \bW = \beta \bI_K + \bun_K \ba ^T, \text{ and } \bb = \gamma \bun_K.
\end{equation}
\end{theorem}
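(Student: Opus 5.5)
We will extract the structure of $\bW$ and $\bb$ by evaluating the invariance \eqref{eq:invariance} on carefully chosen vectors $\bz$, mostly ones whose argmax is a set of ties. Throughout, write $\bW = (w_{kj})$, let $\be_j$ denote the canonical basis vectors, and use that \eqref{eq:invariance} is an equality of \emph{sets}, so ties must be preserved exactly.

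\textbf{Step 1: the bias.} Take $\bz = 0$. Then $\textup{argmax}(\bz) = \{1,\ldots,K\}$, so all coordinates of $\bW\cdot 0 + \bb = \bb$ must be equal, i.e. $\bb = \gamma\bun_K$. Since adding a constant vector does not change the argmax, we may henceforth assume $\bb = 0$, and the condition reads $\textup{argmax}(\bW\bz) = \textup{argmax}(\bz)$ for all $\bz$.

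\textbf{Step 2: off-diagonal entries are constant down each column.} Fix $j$. For $\bz = t\be_j$ with $t<0$, the argmax of $\bz$ is $\{1,\ldots,K\}\setminus\{j\}$, whereas $\bW\bz = t\,\bW_{\cdot j}$ is $t$ times column $j$. Hence all entries $w_{kj}$ for $k\neq j$ must be equal; call this common value $a_j$. In addition, $t\,w_{jj} < t\,a_j$ is required, i.e. $w_{jj} > a_j$. Applying the same argument with $t>0$ yields the same conclusion: the argmax is $\{j\}$, so $w_{jj} > a_j$, which is consistent. Setting $\beta_j = w_{jj} - a_j > 0$, we obtain $\bW = \textup{diag}(\beta_1,\ldots,\beta_K) + \bun_K\ba^T$.

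\textbf{Step 3: all $\beta_j$ are equal.} The rank-one part contributes the same amount $\ba^T\bz$ to every coordinate, so it does not affect the argmax. The condition therefore reduces to $\textup{argmax}(D\bz) = \textup{argmax}(\bz)$ with $D = \textup{diag}(\beta_1,\ldots,\beta_K)$. For $i\neq j$, take $\bz = \be_i + \be_j$. Its argmax is $\{i,j\}$, while $D\bz$ has entries $\beta_i$ and $\beta_j$ in positions $i$ and $j$ and zero elsewhere. Preserving the tie forces $\beta_i = \beta_j$. Hence $D = \beta\bI_K$ with $\beta>0$, which is the claimed form.

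No step is genuinely hard; the main point needing care is Step 2, which extracts the column structure from test vectors with large tie sets. This step depends on the argmax being interpreted as a set, so that ties must be preserved exactly. Under a single-label tie-breaking convention, one would instead need strict-inequality perturbations combined with continuity arguments to reach the same conclusions. The case $K=1$ is trivial, and the argument above assumes $K\ge 2$, which Steps 2 and 3 implicitly require.
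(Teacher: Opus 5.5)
Your proof is correct. It shares the paper's basic strategy of evaluating the set-valued invariance at $\bz=0$ and at a few test vectors with prescribed tie sets, but your choice of test vectors gives a genuinely different and somewhat cleaner decomposition.

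The paper gets the column structure by subtracting the tie equations for $\be_i+\be_j+\be_k$ and $\be_i+\be_k$ to obtain $w_{ij}=w_{kj}$. This needs three distinct indices, so the paper must treat $K=2$ separately. It then uses $\bun_K$ to equalise the offsets $w_{jj}-\alpha_j$, and only at the end uses $\be_1$ to get $\beta>0$.

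Your single probe $-\be_j$ has the $(K-1)$-element argmax $\{1,\ldots,K\}\setminus\{j\}$. In one step it shows that column $j$ is constant off the diagonal and that its diagonal entry is strictly larger. You therefore get the column structure and $\beta_j>0$ for every $j$ together, uniformly for all $K\geq 2$. Discarding the rank-one term, which shifts all coordinates equally, and testing $\be_i+\be_j$ then equalises the $\beta_j$.

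Your route buys the absence of a case split, with positivity for free. The paper's route only ever tests $0/1$ vectors, which would matter if the hypothesis were available only on a restricted set of inputs. Your argument can be put in that form too: testing $\bun_K$ gives $\bW\bun_K\in\mathbb{R}\bun_K$, so the condition at $\bun_K-\be_j$ is equivalent to the condition at $-\be_j$.
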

The proof is available in Appendix \ref{app:accuracy_preserving}. There are two different kinds of linear scalers:
\begin{itemize}[noitemsep,leftmargin=*]
    \item the matrix scaling model $\tilde{p}_{\bW,\bb}$, defined in Equation \eqref{eq:matscaling}, which is linear in the logits,
    \item the Dirichlet calibration model $\check{p}_{\bW,\bb}$, defined in Equation \eqref{eq:dirichlet}, which is linear in the log-probabilities.
\end{itemize}
While the two scalers are not formally equivalent, they both contain temperature scaling as a submodel. The following result shows that this is the only submodel that is accuracy-preserving in both cases.

\begin{corollary}
\label{cor:accuracy_preserving} 
   Let  $\bW \in \mathbb{R}^{K \times K}$ and  $\bb \in \mathbb{R}^K$ such that $\tilde{p}_{\bW,\bb}$ (respectively $\check{p}_{\bW,\bb}$) is accuracy-preserving.
   
   Then, there exists $\beta \geq 0$ such that $\tilde{p}_{\bW,\bb} = p_{\beta}$ (respectively $\check{p}_{\bW,\bb} = p_{\beta}$) .
\end{corollary}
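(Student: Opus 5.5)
The plan is to reduce the corollary to Theorem \ref{th:accuracy_preserving} and then use the shift invariance of the softmax. We first make precise what accuracy-preserving means for each scaler. For matrix scaling, accuracy preservation means $\textup{argmax}\,\text{Softmax}(\bW\bz+\bb)=\textup{argmax}\,\text{Softmax}(\bz)$ for every logit vector $\bz$. Since the softmax is strictly increasing coordinatewise in its argument's order, this is the same as $\textup{argmax}(\bW\bz+\bb)=\textup{argmax}(\bz)$ for all $\bz\in\mathbb{R}^K$. This is exactly hypothesis \eqref{eq:invariance}. We read the requirement as holding for all possible model outputs, which is the natural reading of "does not change the model predictions" for a scaler applied to an arbitrary pretrained model.

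For matrix scaling, Theorem \ref{th:accuracy_preserving} then gives $\bW=\beta\bI_K+\bun_K\ba^T$ and $\bb=\gamma\bun_K$ with $\beta>0$. Consequently
\begin{equation}
\bW\bz+\bb=\beta\bz+(\ba^T\bz+\gamma)\bun_K.
\end{equation}
The softmax is invariant under adding a constant multiple of $\bun_K$. Hence $\text{Softmax}(\bW\bz+\bb)=\text{Softmax}(\beta\bz)$, that is, $\tilde p_{\bW,\bb}=p_\beta$.

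For Dirichlet calibration, the input to the linear map is $\log\bpi$ rather than $\bz$. The main obstacle is that $\log\bpi$ does not range over all of $\mathbb{R}^K$: it ranges over the set $S=\{\mathbf{u}:\sum_k e^{u_k}=1\}$. So the theorem cannot be applied verbatim. We fix this using the fact that any $\bz\in\mathbb{R}^K$ can be written as $\bz=\log\bpi+c\bun_K$, with $\bpi=\text{Softmax}(\bz)$ and $c=\log\mathcal{Z}(1,\bz)$. We then define the modified offset $\bb'(\bz)=\bb+c\bW\bun_K$, so that $\bW\bz+\bb=\bW\log\bpi+\bb'(\bz)$. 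This offset depends on $\bz$ through $c$, so it is cleaner to change the lifting step itself.

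Concretely, accuracy preservation on $S$ already gives $\textup{argmax}(\bW\mathbf{u}+\bb)=\textup{argmax}(\mathbf{u})$ for all $\mathbf{u}\in S$. Since $S$ meets every line $\mathbf{u}+c\bun_K$, the task is to show that the argmax-invariance extends along these lines. The first attempt would be to rerun the argument of Theorem \ref{th:accuracy_preserving}, checking that its proof only probes vectors up to shifts by $\bun_K$ and small perturbations near ties. Ties $u_i=u_j$ are preserved by the shift, and the set $S$ is a smooth hypersurface transverse to $\bun_K$, so every local configuration of ties and near-ties is available on $S$. From this we expect to conclude again that $\bW=\beta\bI_K+\bun_K\ba^T$ and $\bb=\gamma\bun_K$.

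The same softmax-invariance computation then yields
\begin{equation}
\check p_{\bW,\bb}=\text{Softmax}(\beta\log\bpi)=p_\beta,
\end{equation}
where the last equality uses \eqref{eq:logitsvssoftmax}. The statement allows $\beta\geq0$, which is consistent with, and weaker than, the $\beta>0$ obtained here.

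If the extension-to-$S$ argument turns out to be delicate, a fallback is available. Restricting to $S$ changes $\bW$ only modulo $\ba\bun_K^T$-type terms. We would show that on $S$ one may replace $(\bW,\bb)$ by $(\bW+\mathbf{v}\bun_K^T,\,\bb)$ without altering the argmax structure in the relevant way, reduce to the full-space statement, and then absorb the extra term into the constant shift.
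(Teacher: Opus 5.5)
Your matrix-scaling argument is correct and is exactly the paper's proof: you reduce accuracy preservation over all logits to \eqref{eq:invariance}, apply Theorem~\ref{th:accuracy_preserving}, and absorb $(\ba^T\bz+\gamma)\bun_K$ by the shift invariance of the softmax. The genuine gap is in the Dirichlet half. You rightly note that $\log\bpi$ only ranges over $S=\{\mathbf{u}:\sum_k e^{u_k}=1\}$; the paper's proof also skips this, saying only ``essentially the same''. But the step where you ``expect'' the theorem's conclusion to survive on $S$ is unjustified, and for $K=2$ it is false. To move a test vector of the theorem onto $S$ you must shift it by $c\bun_K$, with $c$ depending on the point. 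This changes $\bW\mathbf{u}+\bb$ by $c\,\bW\bun_K$, which is not a priori a multiple of $\bun_K$. Already at $\mathbf{u}=-\log K\,\bun_K$ you only learn $\bb-\log K\,\bW\bun_K\in\mathbb{R}\bun_K$, not $\bb\in\mathbb{R}\bun_K$. Concretely, for $K=2$ take $\bW=\mathrm{diag}(1,0)$ and $\bb=(\log 2,0)^T$. Then $\bW\log\bpi+\bb=(\log(2\pi_1),0)^T$, whose argmax is $\{1\}$, $\{1,2\}$ or $\{2\}$ exactly when $\pi_1>1/2$, $\pi_1=1/2$ or $\pi_1<1/2$, so $\check p_{\bW,\bb}$ is accuracy-preserving. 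Its odds are $2\pi_1$, whereas those of $p_\beta$ are $(\pi_1/(1-\pi_1))^\beta$. Matching them at $\pi_1=1/4$ and $\pi_1=3/4$ would require both $\beta=\log 2/\log 3$ and $\beta=\log(3/2)/\log 3$. So the Dirichlet statement itself fails for $K=2$, and no completion of your sketch can reach it there. Your fallback also rests on a false premise. $S$ affinely spans $\mathbb{R}^K$, so nothing is determined only ``modulo'' extra terms on $S$. Moreover, adding $\mathbf{v}\bun_K^T$ to $\bW$ adds $(\sum_k u_k)\mathbf{v}$, which is not constant on $S$.

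For $K\geq 3$ the claim is true, but the argument must use the curvature of $S$ rather than the linear test vectors of Theorem~\ref{th:accuracy_preserving}. For $i\neq j$, the affine function $d_{ij}(\mathbf{u})=(\bW\mathbf{u}+\bb)_i-(\bW\mathbf{u}+\bb)_j$ must vanish on $T_{ij}=\{\mathbf{u}\in S: u_i=u_j>u_k\ \forall k\neq i,j\}$. This set is a nonempty relatively open subset, of dimension $K-2\geq 1$, of a level set of the strictly convex function $\sum_k e^{u_k}$ restricted to the hyperplane $\{u_i=u_j\}$. It cannot lie in an affine hyperplane of $\{u_i=u_j\}$, since the function would then be constant on a segment. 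Hence $d_{ij}$ vanishes on all of $\{u_i=u_j\}$, i.e.\ $d_{ij}(\mathbf{u})=\lambda_{ij}(u_i-u_j)$. The identity $d_{ij}+d_{jk}=d_{ik}$ forces all $\lambda_{ij}$ to equal a common $\beta$, and a point with a unique maximum gives $\beta>0$. This yields $\bW=\beta\bI_K+\bun_K\ba^T$ and $\bb=\gamma\bun_K$, and your final softmax computation then gives $\check p_{\bW,\bb}=p_\beta$. For $K=2$ the set $T_{12}$ is the single point $(-\log 2,-\log 2)$, which is exactly why both the argument and the statement break down there.
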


The proof is a direct application of Theorem \ref{th:accuracy_preserving}, and is available in Appendix \ref{app:accuracy_preserving2}.

This result clarifies why attempts to design more expressive accuracy-preserving scalers must inevitably depart from linearity (at least when the initial model has a good accuracy). Conversely, it explains why temperature scaling often performs competitively despite its extreme simplicity: among all linear recalibrations, it already exhausts the available degrees of freedom once accuracy is fixed. This result is therefore both an argument in favour of temperature scaling, and also one in favour of nonlinear scalers. 
\section{Discussion}
\label{sec:discussion}
We revisited the elementary properties of temperature scaling, and hope that this work can serve as an invitation to study this simple and popular model.

For probabilistic classifiers, temperature scaling behaves exactly as intended. Increasing the temperature monotonically increases uncertainty in the strong sense of majorisation. This justify its interpretation as a principled uncertainty control tuner. In contrast, we showed that these guarantees do not extend automatically to autoregressive language models as they are used in practice. This observation challenges the common assumption that temperature acts as a reliable diversity knob at the sequence level.

Beyond uncertainty, our results also clarify the structural role of temperature scaling among calibration methods: it is the only linear calibration method that preserves hard predictions. Consequently, any attempt to design more expressive accuracy-preserving scalers must necessarily rely on nonlinear transformations.

One question that remains wide open is \emph{why does temperature scaling performs so well?} \citet{berta2025structured} showed that even in very simple cases, the optimal scaler is generally nonlinear, thus it is surprising to see a model that simple work that well.
Another avenue of future research would be to investigate the practical relevance of our counter-example for real-life LLMs.

Despite its theoretical nature, our work leads to two practical take-home messages:
\begin{itemize}
    \item \emph{For LLMs, temperature can have counterintuive effects on the diversity,} which may explains why other methods to play with text stochasticity are sometimes preferred \citep{holtzman2020curious}, and is in line with the empirical findings of \citet{peeperkorn2024temperature}.
    \item \emph{To calibrate very accurate classifiers, temperature scaling should be preferred to matrix scaling and Dirichlet calibration,} since temperature scaling is the only way to implement accuracy-preservation into these models.
\end{itemize}

\section*{Acknowledgements}
Both authors thank Chengwei Liang for spotting an error in an earlier version of this preprint.

BL was supported by the French government, managed by the National Research Agency (ANR), under the France 2030 programme with the reference ``ANR-23-IACL-0008'' and the Choose France - CNRS AI Rising Talents program. 

PAM was also supported by the French government, through the France 2030 programme managed by the ANR, with the reference number ``ANR-23-IACL-0001''.

\bibliographystyle{abbrvnat}
\bibliography{bibliography}

\newpage
\appendix
\section*{\LARGE Appendix}
\section{Proof that using the logits and using the log-probabilities are equivalent}
\label{app:logitsvsloprobs}
With the notations of Section \ref{sec:whatisTS}, since  $\boldsymbol{\pi} = \text{Softmax} (\bz )$, we find
\begin{align}
    \text{Softmax} (\beta \log \bpi ) &= \text{Softmax} \left(\beta \left(\bz - \log\left(\sum_{y=1}^Ke^{z_y}\right)  \bun_K \right)  \right) \\
    &=  \text{Softmax} (\beta \bz ),
\end{align}
using the fact that the Softmax is invariant to adding the same constant (here $-\beta \log (e^{z_1}+ \ldots + e^{z_K})$) to all coordinates.

\section{Proof of Lemma \ref{prop:logpartition} (derivatives of the log-partition function)}
\label{app:logpartition}
The log-partition function is defined as
\begin{equation}
    \log \mathcal{Z}(\beta, \bz) = \log \left( \sum_{y=1}^K e^{\beta z_y}\right).
\end{equation}
Thus, its first derivative is
\begin{equation}
\label{eq:firstderofZ}
    \frac{d \log \mathcal{Z}(\beta, \bz)}{d\beta } = \frac{ \sum_{y=1}^K z_y e^{\beta z_y}}{ \mathcal{Z}(\beta, \bz)} = \sum_{y=1}^K z_y {\color{blue}\underbrace{\frac{e^{\beta z_y}}{\mathcal{Z}(\beta, \bz)}}_{p_\beta(y| \bx)}}=  \mathbb{E}_{y \sim p_\beta(y | \bx)} [ z_y ].
\end{equation}
The second derivative can also be seen as expectation, using the fact that $\frac{d}{d\beta} p_\beta(y | \bx) = \frac{d \log p_\beta(y | \bx)}{d \beta }  p_\beta(y | \bx)$,
\begin{equation}
\label{eq:secondderofZ}
    \frac{d^2 \log \mathcal{Z}(\beta, \bz)}{d\beta^2 } = \frac{d}{d\beta}\sum_{y=1}^K z_k p_\beta(y | \bx) = \sum_{y=1}^K z_k \frac{d \log p_\beta(y | \bx)}{d \beta }  p_\beta(y | \bx) = \mathbb{E} \left[ z_y \frac{d \log p_\beta(y | \bx)}{d \beta } \right ].
\end{equation}
Now, using the definition of $p_\beta(y | \bx)$ and Equation \eqref{eq:firstderofZ} yields
\begin{equation}
    \frac{d \log p_\beta(y | \bx)}{d \beta } = z_y - \frac{d \log \mathcal{Z}(\beta, \bz)}{d\beta } =  z_y - \mathbb{E}_{y \sim p_\beta(y | \bx)} [ z_y ].
\end{equation}
Plugging this back in Equation \eqref{eq:secondderofZ} finally leads to
\begin{equation}
    \frac{d^2 \log \mathcal{Z}(\beta, \bz)}{d\beta^2 } =  \mathbb{E} \left[ z_y (z_y - \mathbb{E}_{y \sim p_\beta(y | \bx)} [ z_y ] ) \right ] = \mathbb{E}_{y \sim p_\beta(y | \bx)}[z_y^2] -  \mathbb{E}_{y \sim p_\beta(y | \bx)}[z_y] ^2 = \textup{Var}_{y \sim p_\beta(y | \bx)} ( z_y ).
\end{equation}
\section{Proof of Proposition \ref{prop:likelihood} (properties of the likelihood)}
\label{app:likelihood}
The negative log-likelihood is equal to
\begin{equation}
    \mathcal{L}(\beta) =  -\frac{1}{n} \sum_{i=1}^n \log  p_{\beta} ( y_i | \bx_i) =  -\frac{1}{n} \sum_{i=1}^n \left( \beta z_{y_i} - \log \mathcal{Z}(\beta, \bz_i) \right).
\end{equation}
Thus, using Lemma \ref{prop:logpartition}, we find
\begin{equation}
\label{eq:likelihoodz}
    \frac{d\mathcal{L} (\beta) }{d \beta } =    -\frac{1}{n} \sum_{i=1}^n \left( z_y - \mathbb{E}_{y \sim p_\beta(y | \bx)} [ z_y ] \right).
\end{equation}
Now, since $\log p_\beta(y | \bx_i) = \beta z_{y_i} - \log \mathcal{Z}(\beta, \bz_i)$, we can write $ z_{y_i} = (1/\beta) (\log p_\beta(y | \bx_i) + \log \mathcal{Z}(\beta, \bz_i))$, and plugging that in Equation \eqref{eq:likelihoodz}, we find
\begin{equation}
 \frac{d \mathcal{L}(\beta)}{d\beta} =\frac{1}{\beta} \left( \underbrace{-\frac{1}{n} \sum_{i=1}^n  \log p_\beta (y_i | \bx_i)}_\textup{cross-entropy}    -  \underbrace{\left(-\frac{1}{n} \sum_{i=1}^n\mathbb{E}_{y \sim p_\beta(y | \bx_i)} [ \log p_\beta(y |\bx_i)]\right)} _\textup{expected cross-entropy}\right).
    \end{equation}    
The second derivative is a direct consequence of the second derivative of the log-partition (Lemma \ref{prop:logpartition}):
        \begin{equation}
        \frac{d^2 \mathcal{L} (\beta)}{d\beta^2}  = \frac{1}{n} \sum_{i=1}^n \textup{Var}_{y \sim p_\beta(y | \bx_i)} ( z_y ).
    \end{equation}   
\section{Proof of Proposition \ref{prop:ent_derivative} (monotonicity of the entropy)}
\label{app:monotonicity}
The entropy is equal to 
\begin{align}
    H(p_\beta(y | \bx)) &= -\sum_{y=1}^K p_\beta(y | \bx) \log p_\beta(y | \bx) =  -\sum_{y=1}^K p_\beta(y | \bx) \left( \beta z_y - \log \mathcal{Z}(\beta, \bz) \right) \\ &= - \beta {\color{blue}\underbrace{\mathbb{E}_{y \sim p_\beta(y | \bx)}[z_y]}_{\frac{d \log \mathcal{Z}(\beta, \bz)}{d\beta } \mbox{ \small (Lemma \ref{prop:logpartition})}}} +  \log \mathcal{Z}(\beta, \bz).
\end{align}
Thus, using Lemma \ref{prop:logpartition} again, we find
\begin{equation}
   \frac{d H(p_\beta(y | \bx))}{d\beta} = - \beta \frac{d^2 \log \mathcal{Z}(\beta, \bz)}{d\beta^2 } - \frac{d \log \mathcal{Z}(\beta, \bz)}{d\beta } + \frac{d \log \mathcal{Z}(\beta, \bz)}{d\beta } = - \beta \frac{d^2 \log \mathcal{Z}(\beta, \bz)}{d\beta^2 } = - \beta  \textup{Var}_{y \sim p_\beta(y | \bx)} ( z_y ).
\end{equation}

\section{More details on the toy language model}
\label{app:llm}

\begin{figure}[t]
    \centering
    \includegraphics[width=0.48\linewidth]{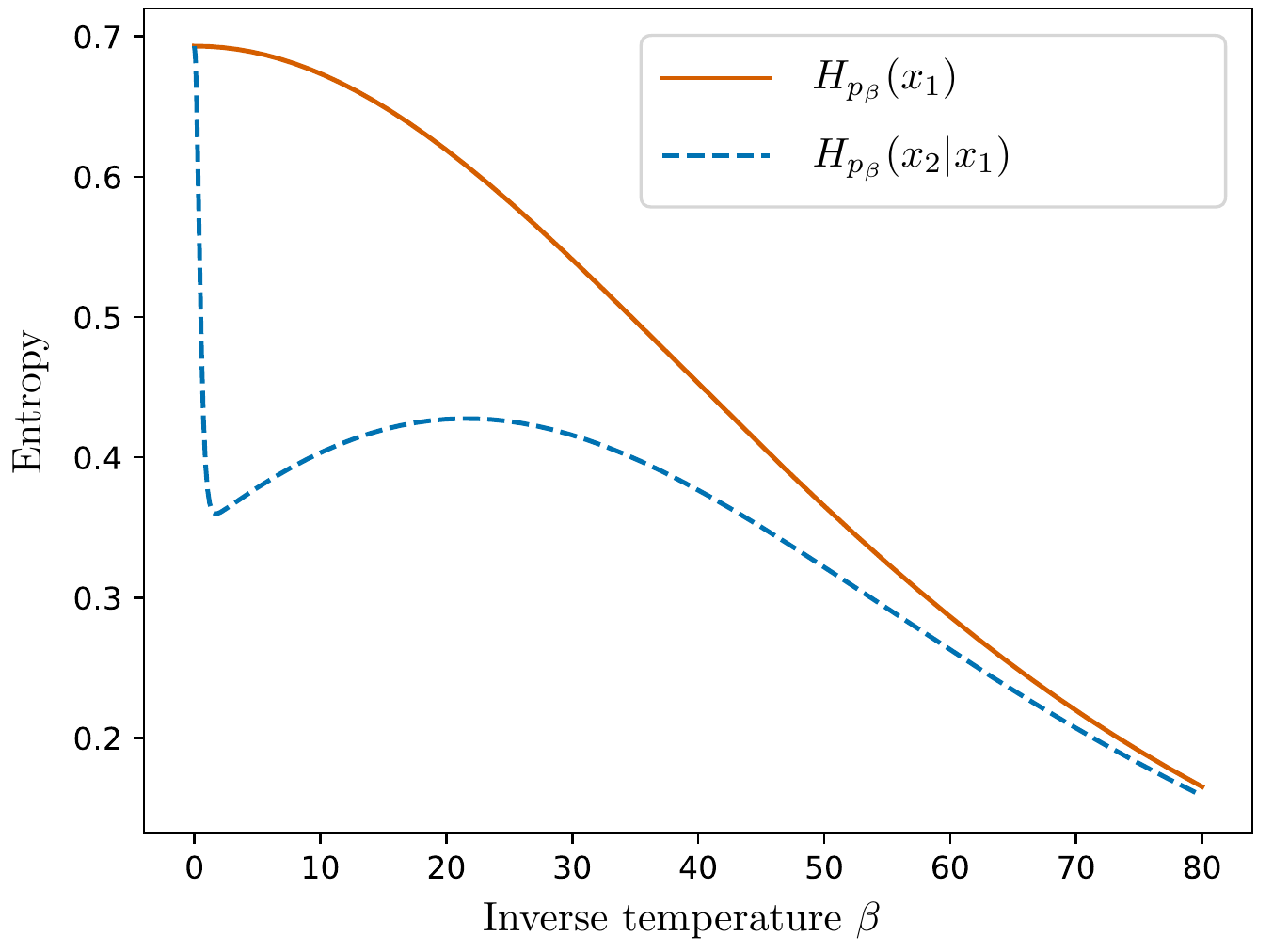}
    \includegraphics[width=0.48\linewidth]{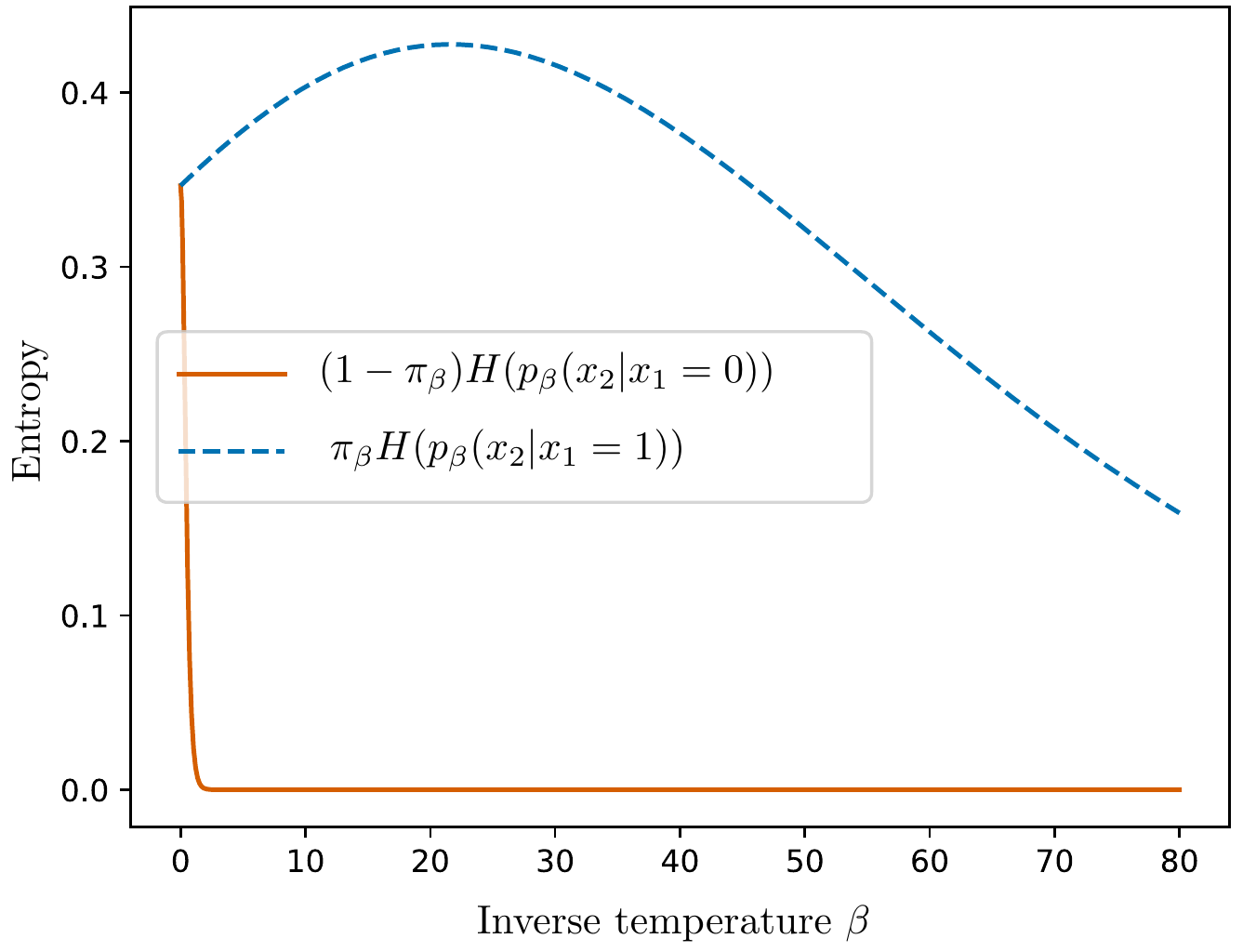}
    \caption{Terms involved in the computation of the entropy of the myopically scaled model. \emph{(Left)} The marginal and conditional entropies of Equation \ref{eq:app_chainrule1}. \emph{(Right)} The two terms of the decomposition of Equation \eqref{eq:app_chainrule2}. The source of nonmotonicity is the term $\pi_\beta H(p_{\beta}(x_2| x_1 = 1)$, that is the product of an increasing and a decreasing functions.}
    \label{fig:llm}
\end{figure}

In the context of Section \ref{sec:llms_ent}, the entropy of the myopic model can be decomposed using the chain rule as
\begin{equation}
\label{eq:app_chainrule1}
     H(p'_{\beta}) = H(p_{\beta}(x_1)) +  \mathbb{E}_{p_\beta(x_1)} [H(p_{\beta}(x_2| x_1))],
\end{equation}
where $H(p_{\beta}(x_1))$ is the marginal entropy of the first token, that we will denote $H_{p_{\beta}}(x_1)$, and  $\mathbb{E}_{p_\beta(x_1)} [H(p_{\beta}(x_2| x_1))]$ is the conditional entropy, that we will denote $H_{p_{\beta}}(x_2 | x_1)$. According to Proposition \ref{prop:ent_derivative}, $H_{p_{\beta}}(x_1)$ is decreasing. Since $p(x_1)= \mathcal{B}(x_1 | \pi)$ and $p(x_2 | x_1)= \mathcal{B}(x_2 | \rho_{x_1})$, the conditional entropy can be decomposed as
\begin{equation}
\label{eq:app_chainrule2}
    \mathbb{E}_{p_\beta(x_1)} [H(p_{\beta}(x_2| x_1))] =   {\color{blue}\underset{\searrow }{(1- \pi_\beta)}} {\color{green}\underset{\searrow }{H(p_{\beta}(x_2| x_1 = 0))}} +  {\color{yellow}\underset{\nearrow }{\pi_\beta}}{\color{red} \underset{\searrow }{H(p_{\beta}(x_2| x_1 = 1))}}.
\end{equation}
The two entropies of the conditional distributions are decreasing (again, because of Proposition \ref{prop:ent_derivative}). However, since $\pi = 0.55$, $\pi_\beta$ is an increasing function of $\beta$, which explains that the conditional entropy can actually be locally increasing.

\section{Proof of Theorem \ref{th:accuracy_preserving} (characterisation of linear accuracy-preserving functions)}
\label{app:accuracy_preserving}
\begin{theorem}
    Let $\bW \in \mathbb{R}^{K \times K}$ and  $\bb \in \mathbb{R}^K$ such that, for all $\bz \in \mathbb{R}^K$ we have
    \begin{equation}
    \label{eq:invarianceapp}
        \textup{argmax}(\bW \bz+\bb) = \textup{argmax}(\bz).
    \end{equation}
Then, there exist $\ba \in \mathbb{R}^K$, $\beta> 0$, and $\gamma \in \mathbb{R}$ such that
\begin{equation}
    \bW = \beta \bI_K + \bun_K \ba ^T, \text{ and } \bb = \gamma \bun_K.
\end{equation}
\end{theorem}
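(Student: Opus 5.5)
We will extract the structure of $\bW$ and $\bb$ by feeding the invariance \eqref{eq:invarianceapp} well-chosen test vectors $\bz$. It is essential that the argmax is treated as a \emph{set}, so ties must be preserved exactly.

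\textbf{Step 1: the bias.} Taking $\bz = 0$, the set $\textup{argmax}(0)$ is all of $\{1,\ldots,K\}$. Hence every coordinate of $\bb$ attains the maximum, so $\bb = \gamma \bun_K$. Since adding $\gamma\bun_K$ never changes an argmax, we may assume $\bb=0$ from now on. The condition then becomes $\textup{argmax}(\bW\bz)=\textup{argmax}(\bz)$ for all $\bz$.

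\textbf{Step 2: columns via $\bz=\pm t\be_j$ and tie vectors.} Let $w_{ij}$ denote the entries of $\bW$.
\begin{itemize}[noitemsep]
    \item \textbf{Off-diagonal entries of a column are equal.} Take $\bz = -\be_j$. Then $\textup{argmax}(\bz)=\{1,\ldots,K\}\setminus\{j\}$. The vector $\bW\bz$ is minus the $j$-th column, so all entries $w_{ij}$ with $i\ne j$ are equal (say to $a_j$), and $-w_{jj} < -a_j$.
    \item \textbf{Diagonal dominance.} Writing $w_{jj} = a_j + \beta_j$, the previous inequality gives $\beta_j>0$. Taking $\bz=\be_j$ gives the same conclusion, consistently.
\end{itemize}
So far $\bW = \mathrm{diag}(\beta_1,\ldots,\beta_K) + \bun_K \ba^T$. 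Since $\bun_K\ba^T\bz = (\ba^T\bz)\bun_K$ is a constant shift, the argmax condition reduces to $\textup{argmax}(D\bz)=\textup{argmax}(\bz)$ with $D=\mathrm{diag}(\beta_j)$.

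\textbf{Step 3: equal diagonal scales.} We expect this to be the only point needing care. Take $\bz = \be_j + \be_k$ for $j\ne k$. Its argmax is $\{j,k\}$, while the argmax of $D\bz$ is determined by the nonzero entries $\beta_j,\beta_k$ (both positive, the other entries being $0$). A tie is therefore forced, so $\beta_j=\beta_k$. Hence all $\beta_j$ equal a common $\beta>0$, giving $\bW=\beta\bI_K+\bun_K\ba^T$ and $\bb=\gamma\bun_K$, as claimed.

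The case $K=1$ is trivial: every $\bW$ has the stated form with a suitable $\ba$. The main obstacle is simply remembering that the argmax is set-valued. With a single-valued argmax, ties could not be exploited, and Steps 1 and 3 would need perturbation and continuity arguments instead.
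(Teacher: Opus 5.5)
Your proof is correct and follows the paper's strategy: $\bz=0$ forces $\bb=\gamma\bun_K$, and set-valued ties in the argmax then pin down $\bW$ through well-chosen test vectors. The difference is your single probe $-\be_j$, which gives both equal off-diagonal entries in column $j$ and $\beta_j>0$ uniformly for $K\ge 2$; the paper instead subtracts the conditions from $\be_i+\be_j+\be_k$ and $\be_i+\be_k$ (with a separate $K=2$ case), equalises the diagonal shifts with $\bun_K$, and gets positivity last from $\be_1$, while your explicit $K=1$ case is one the paper omits.
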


\begin{proof}
The roadmap of the proof goes as follows: first, we will prove that $\bb$ has the desired form, then we will prove that $\bW$ has the desired form (by studying separately the cases $K=2$ and $K \geq 3$). Finally, we will show that $\beta>0$.

Applying Equation \eqref{eq:invarianceapp} to $\bz=0$ leads to $\text{argmax}(\bb)= \text{argmax}(\boldsymbol{0}_K)= \{1, \ldots, K \} $, therefore all coefficients of $\bb$ need to be equal, and thus there exists $\gamma \in \mathbb{R}$ such that $\bb = \gamma \bun_K$.

We now turn to $\bW$. The fact that $\bb = \gamma \bun_K$ implies that, for all $\bz \in \mathbb{R}^K$, 
\begin{equation}
\label{eq:invariance_linear}
    \text{argmax}(\bW \bz+\bb)= \text{argmax}(\bW \bz) = \text{argmax}(\bz),
\end{equation}
which will be our main tool to study the structure of $\bW$.

We begin with the case $K=2$. Let $\bW \in \mathbb{R}^{2 \times 2}$ that verifies \eqref{eq:invariance_linear}, and let $\alpha_1 = w_{21}$, $\alpha_2 = w_{12}$, and $\beta = w_{11}- \alpha_1$. The fact that $\text{argmax}(\bW \bun_2) = \text{argmax}(\bun_2)= \{1,2\}$ implies that
\begin{equation}
    w_{11} + w_{12} = w_{21} +w_{22} \implies w_{22} = w_{11} + w_{12} - w_{21} = \beta + \alpha_1 + \alpha_2 - \alpha_1 = \beta + \alpha_2.
\end{equation}
Thus, $\bW = \beta \bI_K + \bun_K \alpha ^T$.

Let us now treat the case $K\geq 3$. For any $j \in \{1,\ldots,K\}$, let us denote $\be_j\in \mathbb{R}^K$ the vector whose only nonzero coefficient is the $j$th one, which is equal to $1$. Let $\bW$ be a matrix that satisfies \eqref{eq:invariance_linear}. Let $i,j,k$ be three distinct indices in $\{1, \ldots, K\}$. Applying the condition to the vector $\be_i + \be_j + \be_k$ leads to
\begin{equation}
    w_{ii} + w_{ij} + w_{ik} = w_{ji} + w_{jj} + w_{jk} = w_{ki} + w_{kj} + w_{kk},
\end{equation}
and applying it to  $\be_i + \be_k$ leads to
\begin{equation}
    w_{ii} + w_{ik} = w_{ki} + w_{kk}.
\end{equation}
Subtracting these equations implies that $w_{ij} = w_{kj}$. This means that, for each column $j$, all elements but the diagonal one will have the same value, that we will denote $\alpha_j$. 

Let us now use condition \eqref{eq:invariance_linear} with the vector $\bun_K$. This implies that 
\begin{equation}
    w_{11} + \sum_{j \neq 1} \alpha_j =  w_{22} + \sum_{j \neq 2} \alpha_j = \ldots = w_{KK} + \sum_{j \neq K} \alpha_j.
\end{equation}
Subtracting $\alpha_1 + \ldots + \alpha_K$ from all terms leads then yields
\begin{equation}
    w_{11} - \alpha_1 = \ldots =  w_{KK} - \alpha_K.
\end{equation}
Let us denote this quantity $\beta$. Putting the pieces together, we have proven that $\bW = \beta \bI_K + \bun_K \ba ^T$.

The only thing left to prove is the positivity of $\beta$. This follows from the fact that $\text{argmax}(\bW \be_1) = \text{argmax}(\be_1) = \{1\}$. Indeed, this implies that $w_{11}>w_{21}$ i.e. that $\beta + \alpha_1 > \alpha_1$, leading to $\beta>0$.
\end{proof}

\section{Proof of Corollary \ref{cor:accuracy_preserving} (temperature scaling is the only accuracy-preserving linear scaler)}
\label{app:accuracy_preserving2}
\begin{corollary}
   Let  $\bW \in \mathbb{R}^{K \times K}$ and  $\bb \in \mathbb{R}^K$ such that $\tilde{p}_{\bW,\bb}$ (respectively $\check{p}_{\bW,\bb}$) is accuracy-preserving.
   
   Then, there exists $\beta \geq 0$ such that $\tilde{p}_{\bW,\bb} = p_{\beta}$ (respectively $\check{p}_{\bW,\bb} = p_{\beta}$) .

\end{corollary}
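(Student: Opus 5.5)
The plan is to reduce the corollary to Theorem \ref{th:accuracy_preserving} and then use the invariance of the softmax to adding a constant vector. The first step is to fix what ``accuracy-preserving'' means as a condition on $(\bW,\bb)$. For matrix scaling, the hard prediction is $\argmax(\bW\bz+\bb)$ and the original one is $\argmax(\bz)$. We require these to agree for every logit vector $\bz\in\mathbb{R}^K$, which is exactly hypothesis \eqref{eq:invariance}.

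For Dirichlet calibration, the input is $\log\bpi$ with $\bpi=\text{Softmax}(\bz)$, and $\argmax(\log\bpi)=\argmax(\bz)$. We require $\argmax(\bW\log\bpi+\bb)=\argmax(\log\bpi)$ for all $\bpi$ in the interior of $\Delta_K$. The set of such vectors $\log\bpi$ is not all of $\mathbb{R}^K$, but it is $\{\bz-\log(\sum_y e^{z_y})\bun_K:\bz\in\mathbb{R}^K\}$. So for any $\bz\in\mathbb{R}^K$ we can write $\bz=\log\bpi+c\bun_K$ with $c=\log(\sum_y e^{z_y})$, and this reduction is the step that needs care. A direct computation gives $\bW\bz+\bb=\bW\log\bpi+\bb+c\bW\bun_K$, and the extra term $c\bW\bun_K$ need not be a multiple of $\bun_K$ a priori. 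I see two ways to handle it.
\begin{itemize}
\item \emph{Reuse the proof of Theorem \ref{th:accuracy_preserving}.} That proof only tests \eqref{eq:invariance} on the vectors $\boldsymbol{0}$, $\be_i+\be_j+\be_k$, $\be_i+\be_k$, $\bun_K$ and $\be_1$. Each of these is a shift of some $\log\bpi$, but the invariance is assumed for $\log\bpi$, not for its shifts, so this alone does not close the gap.
\item \emph{Use homogeneity, which is what I would do.} Both sides of the invariance condition are unchanged when the input is multiplied by $t>0$. Among vectors of the form $\log\bpi$, the first-row arguments of the proof can be rerun on the points $\log\bpi=t(\bv-\bun_K)$ with $t>0$ large or small, chosen so that the vector lies in the image. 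Alternatively, first show $\bW\bun_K\propto\bun_K$ by testing the condition on vectors with ties.
\end{itemize}
I expect this transfer from the image of $\log$ to all of $\mathbb{R}^K$ to be the main obstacle. If it resists, the fallback is to define accuracy preservation for $\check p$ as invariance for all inputs $\bz\in\mathbb{R}^K$ (the log-probabilities up to a shift) and apply the theorem directly.

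Once Theorem \ref{th:accuracy_preserving} applies, we have $\bW=\beta\bI_K+\bun_K\ba^T$ and $\bb=\gamma\bun_K$ with $\beta>0$. Then $\bW\bz+\bb=\beta\bz+(\ba^T\bz+\gamma)\bun_K$. Since the softmax is invariant to adding a constant vector, $\text{Softmax}(\bW\bz+\bb)=\text{Softmax}(\beta\bz)$, hence $\tilde p_{\bW,\bb}=p_\beta$.

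The same computation with $\log\bpi$ in place of $\bz$ gives $\check p_{\bW,\bb}\propto\text{Softmax}(\beta\log\bpi)$. By \eqref{eq:logitsvssoftmax} this equals $\text{Softmax}(\beta\bz)=p_\beta$. Since $\beta>0$, in particular $\beta\ge0$ as claimed.
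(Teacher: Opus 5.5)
Your matrix-scaling argument is exactly the paper's: Theorem \ref{th:accuracy_preserving} followed by shift invariance of the softmax. For Dirichlet calibration you have correctly spotted something the paper passes over with ``essentially the same''. Accuracy preservation of $\check p_{\bW,\bb}$ only constrains the affine map $\mathbf{v}\mapsto\bW\mathbf{v}+\bb$ on the hypersurface $S=\{\mathbf{v}\in\mathbb{R}^K:\sum_y e^{v_y}=1\}$ of log-probability vectors. Your preferred repair, however, fails. The claim that both sides of the invariance condition are unchanged under $\mathbf{v}\mapsto t\mathbf{v}$ is false for the left-hand side unless $\bb\in\mathbb{R}\bun_K$, and you cannot establish that first. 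The paper gets it from $\bz=\boldsymbol{0}$, which is not in $S$. The only all-tie point of $S$, namely $-(\log K)\bun_K$, yields only $\bb-(\log K)\bW\bun_K\in\mathbb{R}\bun_K$. That does not force either $\bb$ or $\bW\bun_K$ into $\mathbb{R}\bun_K$, so the ``ties'' variant fails as well.

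In fact no argument can close this gap, because under your reading the Dirichlet half of the statement is false. Take $K=2$, $\bW=\mathrm{diag}(1,0)$ and $\bb=(\log 2,0)^T$. Then $\bW\log\bpi+\bb=(\log(2\pi_1),0)^T$, whose argmax is $\{1\}$, $\{1,2\}$ or $\{2\}$ according as $\pi_1>1/2$, $\pi_1=1/2$ or $\pi_1<1/2$. So this scaler is accuracy-preserving on the whole open simplex. Yet its odds of class $1$ against class $2$ are $2\pi_1$, while those of $p_\beta$ are $(\pi_1/\pi_2)^\beta$. Taking $\pi_1=1/4$ forces $3^\beta=2$, and taking $\pi_1=3/4$ forces $3^\beta=3/2$, a contradiction.

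Hence your fallback is not optional. The corollary for $\check p_{\bW,\bb}$ holds only if accuracy preservation means $\argmax(\bW\mathbf{v}+\bb)=\argmax(\mathbf{v})$ for all $\mathbf{v}\in\mathbb{R}^K$. This is what the paper tacitly assumes, and with that definition your final paragraph completes the proof.

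Your homogeneity idea does go through once $\bb\in\mathbb{R}\bun_K$ is known. Every ray in the open negative orthant meets $S$ exactly once, the tie point then gives $\bW\bun_K\in\mathbb{R}\bun_K$, and shifts by multiples of $\bun_K$ extend the invariance to all of $\mathbb{R}^K$. The counterexample shows that the offset $\bb$ is exactly the extra freedom.
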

\begin{proof}
    Let us focus on matrix scaling (the proof for Dirichlet calibration is essentially the same). Let  $\bW \in \mathbb{R}^{K \times K}$ and  $\bb \in \mathbb{R}^K$ such that $\tilde{p}_{\bW,\bb}$ is accuracy-preserving. Because of Theorem \ref{th:accuracy_preserving}, there exist $\ba \in \mathbb{R}^K$, $\beta> 0$, and $\gamma \in \mathbb{R}$ such that
$
    \bW = \beta \bI_K + \bun_K \ba ^T, \text{ and } \bb = \gamma \bun_K
$. Thus, for all $\bz \in \mathbb{R}^K$
\begin{equation}
    \textup{Softmax} (\bW \bz+\bb) = \textup{Softmax} ( \beta \bz + (\ba^T \bz + \gamma)\bun_K) = \textup{Softmax} ( \beta \bz),
\end{equation}
using the fact that the softmax is invariant to adding a vector with identical coordinates, therefore $\tilde{p}_{\bW,\bb} = p_{\beta}$.
\end{proof}

\end{document}